\newcommand*{\argmax}{\mathop{\mathrm{argmax}}}
\newcommand{\Acal}{\mathcal{A}}
\newcommand{\Bcal}{\mathcal{B}}
\newcommand{\Dcal}{\mathcal{D}}
\newcommand{\Ncal}{\mathcal{N}}
\newcommand{\Tcal}{\mathcal{T}}
\newcommand{\Xcal}{\mathcal{X}}
\newcommand{\Ycal}{\mathcal{Y}}
\newcommand{\Ebb}{\mathbb{E}}
\newcommand{\Pbb}{\mathbb{P}}
\newcommand{\BlackBox}{\rule{1.5ex}{1.5ex}}  %
\def\QED{~\rule[-1pt]{5pt}{5pt}\par\medskip}
\newenvironment{proof}{\par\noindent{\em Proof:\ }}{\hfill\BlackBox\\[.0mm]}
\newtheorem{theorem}{Theorem}[section]
\newtheorem{lemma}{Lemma}[section]
\newtheorem{proposition}{Proposition}[section]
\newtheorem{definition}{Definition}[section]
\newtheorem{assumption}{Assumption}
\newcommand{\bx}{\bm{x}}
\newcommand{\by}{\bm{y}}
\newcommand{\facc}[2]{{#1}{\scriptsize±{#2}}}
\title{Transferring Fairness under Distribution Shifts \\ via Fair Consistency Regularization}
\author{%
  Bang An\\
  Department of Computer Science\\
  University of Maryland, College Park\\
  \texttt{bangan@umd.edu} \\
   \And
   Zora Che \\
   Department of Computer Science \\
   Boston University \\
   \texttt{zche@bu.edu} \\
   \And
   Mucong Ding \\
   Department of Computer Science\\
  University of Maryland, College Park\\
  \texttt{mcding@umd.edu} \\
  \And
  Furong Huang\\
  Department of Computer Science\\
  University of Maryland, College Park\\
  \texttt{furongh@umd.edu} 
}
\begin{document}

\maketitle

\begin{abstract}
The increasing reliance on ML models in high-stakes tasks has raised a major concern about fairness violations. Although there has been a surge of work that improves algorithmic fairness, most are under the assumption of an identical training and test distribution. In many real-world applications, however, such an assumption is often violated as previously trained fair models are often deployed in a different environment, and the fairness of such models has been observed to collapse. In this paper, we study how to transfer model fairness under distribution shifts, a widespread issue in practice. We conduct a fine-grained analysis of how the fair model is affected under different types of distribution shifts and find that domain shifts are more challenging than subpopulation shifts. Inspired by the success of self-training in transferring accuracy under domain shifts, we derive a sufficient condition for transferring group fairness. Guided by it, we propose a practical algorithm with fair consistency regularization as the key component. A synthetic dataset benchmark, which covers diverse types of distribution shifts, is deployed for experimental verification of the theoretical findings. Experiments on synthetic and real datasets, including image and tabular data, demonstrate that our approach effectively transfers fairness and accuracy under various types of distribution shifts\footnote{Code is available at https://github.com/umd-huang-lab/transfer-fairness.}.
\end{abstract}

\setlength\abovedisplayskip{2pt}
\setlength\belowdisplayskip{2pt}

\section{Introduction}\label{sec:intro}

\begin{figure}[!htbp]
\vspace{-2em}
  \begin{minipage}[c]{0.55\textwidth}
    \includegraphics[width=\textwidth]{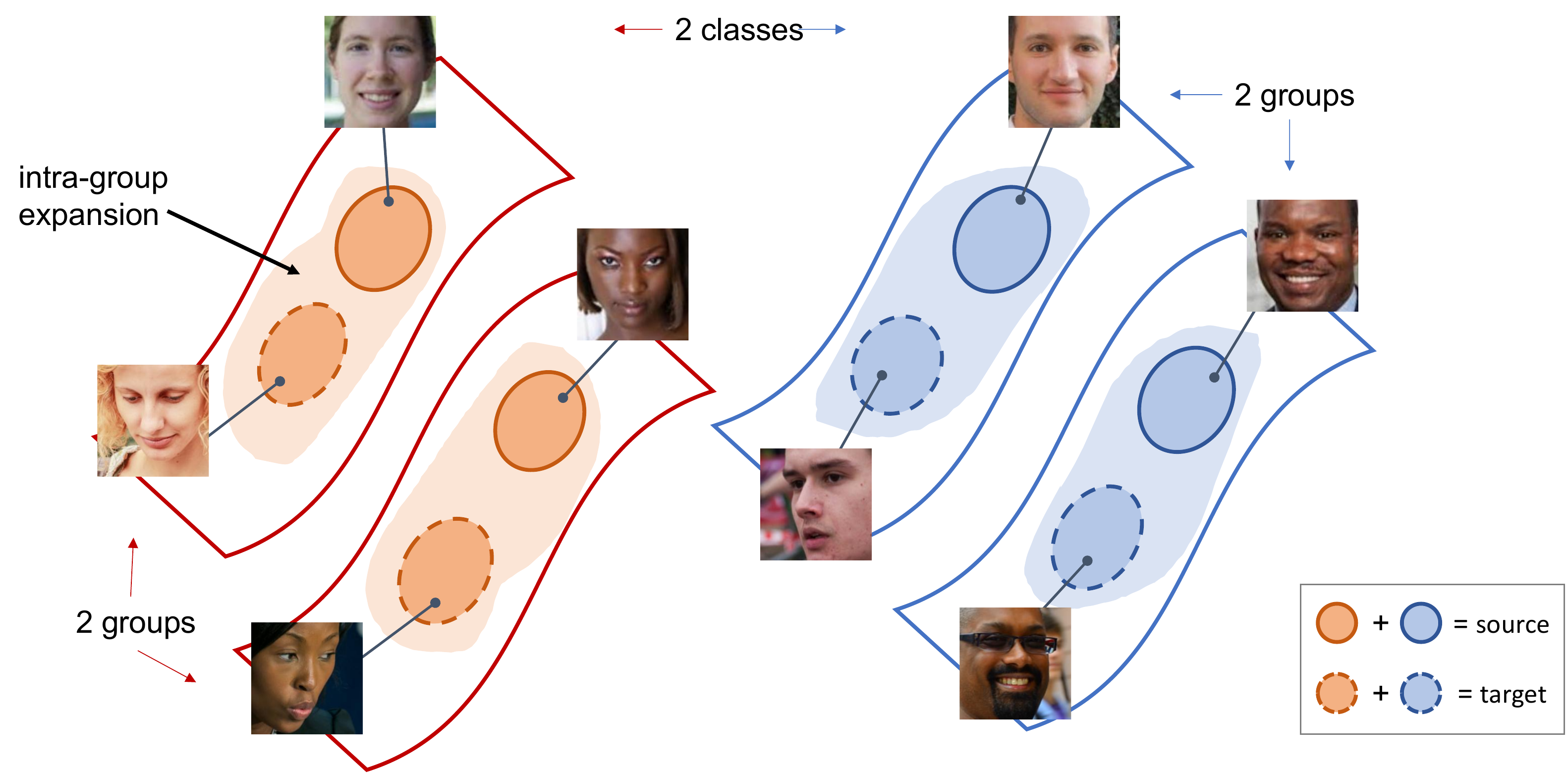}
  \end{minipage}\hfill
  \begin{minipage}[c]{0.44\textwidth}
    \begin{center}
    \resizebox{\textwidth}{!}{
    \begin{tabular}{c|cc}
    \toprule
         & \textbf{Source} & \textbf{Target} \\
         \midrule
       Sampled from  & UTKFace~\cite{zhifei2017cvpr} & FairFace~\cite{karkkainen2019fairface}\\
       \midrule
       Class & \multicolumn{2}{c}{Gender} \\
       \midrule
       Sensitive attribute & \multicolumn{2}{c}{Race}\\
       \midrule
       Group  & \multicolumn{2}{c}{Gender $+$ Race} \\
       \midrule
        Shared  & \multicolumn{2}{c}{Underlying data generation process} \\
        \midrule
        \multirow{2}{*}{Differ} &  \multicolumn{2}{c}{Capturing-bias, e.g., diff}\\
        & \multicolumn{2}{c}{angles, facial expressions} \\ 
        \bottomrule
    \end{tabular}
    }
    \end{center}
  \end{minipage}
  \vspace{-0.5em}
  \caption{\textbf{Illustration of intra-group expansion assumption in the input space.} An example of gender classification task with the sensitive attribute being race.  Intra-group expansion assumes that different groups are separated 
  but every group is self-connected under certain transformations. If a model has consistent predictions under those transformations, we can propagate labels within each group. Under this assumption, we propose to obtain fairness and accuracy in both domains by a self-training algorithm with fair consistency regularization.}\label{fig:overview}
\end{figure}

Machine learning's social impact has broadened as it is widely used to aid decision-making in real-world applications, such as hiring, loan approval, facial recognition, and criminal justice. 
To avoid discrimination against a subset of the population (e.g., w.r.t race or gender), many efforts on algorithmic fairness have been carried out \cite{chouldechova2016fair, friedler2016impossibility, pmlr-v54-zafar17a, survey, rajkomar2018ensuring, corbettdavies2018measure, caton2020fairness}.
Although existing work has achieved remarkable success in ensuring fairness, most of them assume the distribution of data at test time is identical to that in the training set.
However, recent studies show that the fairness of a model is likely to collapse when encountering a distribution shift. 
For example, \cite{ding2021retiring} observes that a fair income predictor trained with data from one state might not be fair when used in other states.
\cite{schrouff2022maintaining} tries to maintain fairness in healthcare settings, but a model that performs fairly according to the metric evaluated in ``Hospital A'' shows unfairness when applied to ``Hospital B''. 
Such observations motivate us to find the reason behind the collapse of fairness and investigate how to transfer fairness under distribution shifts. 
Specifically, when we have labeled data in the source domain and unlabeled data in the target domain, we investigate how to adapt the fair source model to a target domain with the goal of achieving both accuracy and fairness in both domains.

Intuitively, the fairness of a model in the target domain strongly depends on the nature of distribution shifts. 
In this paper, we only consider cases where the oracle model is the same in two domains.
We characterize distribution shifts by assuming two domains share the same underlying data generation process where data is generated from a set of latent factors with a fixed generative model, and the shift is caused by the shift of the marginal distribution of some factors.
We categorize distribution shifts into three types \cite{koh2021wilds}:
1) \textit{Domain shift} 
where source and target distributions comprise data from related but distinct domains (e.g., train a model in hospital A but test it in hospital B). 2) \textit{Subpopulation shift} where two domains overlap, but relative proportions of subpopulations differ (e.g., the proportion of female candidates increases at test time). 3) \textit{Hybrid shift} where domain shift and subpopulation shift happen at the same time.
We find domain shift more challenging for transferring fairness since the model's performance is unpredictable in unseen domains. 
Such a finding is supported empirically on a synthetic dataset that is developed to simulate diverse types of distribution shifts.
While recent work explores methods to transfer fairness \cite{singh_violate, rezaei_robust, giguere2022fairness}, most considered settings fall into subpopulation shifts. In this paper, we consider all three types of distribution shifts. 
Our analysis suggests we encourage consistent fairness under different factor values.

We draw inspiration from recent progress on self-training in transferring accuracy under domain shifts~\cite{colin2021, tianle2021, zhang2021semisupervised, berthelot2021adamatch, Sagawa2021ExtendingTW, sohn2020fixmatch}. 
The success of self-training is due to an \textit{expansion assumption} and a \textit{consistency regularization} algorithm. 
The expansion assumption also assumes two domains share one underlying generative model and the support of the distribution on each class is a connected compact set under data transformations (i.e., has a good continuity). 
Under the \textit{expansion assumption}, \cite{colin2021} and \cite{tianle2021}
prove that self-training, which enforces consistent predictions for the same input under different transformations (i.e., under shifts of nuisance factors), can propagate labels from the source to the target domain.
This approach exhibits superior performance in transferring accuracy \cite{zhang2021semisupervised, Sagawa2021ExtendingTW}, but does not consider fairness.

Taking demography into consideration, we relax the expansion assumption to a more realistic \textit{intra-group expansion assumption}, as shown in Figure~\ref{fig:overview}, which only requires continuity of the underlying distribution within every group (i.e., data points with the same class and sensitive attribute) rather than the entire class.
Based on the intra-group expansion assumption, we derive a sufficient condition that guarantees fairness in both source and target domains. 
This sufficient condition suggests that ensuring the trained model gains the same consistency across groups under a fair teacher classifier guarantees fairness in both domains.
However, such a teacher classifier is not available in practice, and we need a practical treatment.

Guided by the theoretical algorithm, we propose a practical self-training algorithm to minimize and balance consistency loss across groups. Our algorithm builds upon Laftr \cite{david_learning_fair}, an adversarial learning method for fairness, and FixMatch \cite{sohn2020fixmatch}, a self-training framework. 
To encourage similar consistency in different groups, we propose a novel \textit{fair consistency regularization}. 
By reweighting the consistency loss of each group dynamically according to the model's performance, the algorithm encourages the model to pay more attention to the high-error group while training. Our method results in a model that is fair in source and has similar consistency across groups. As indicated by our theory, it would have similar accuracy across groups in the target domain so that we can transfer fairness.
We evaluate our method under different types of distribution shifts with the synthetic and real datasets. Experiments show that our approach achieves high accuracy and fairness in the target domain without sacrificing performance in the source domain. 
To the best of our knowledge, this is the first work using self-training to transfer fairness under distribution shifts.

\textbf{Summary of contributions:}
\textbf{(1)} We provide a fine-grained analysis of fairness under distribution shifts and develop a synthetic dataset to study model fairness under different types of distribution shifts.
\textbf{(2)} Theoretically, we derive a sufficient condition for transferring fairness under distribution shifts.
\textbf{(3)} Algorithmically, we propose a theory-guided algorithm for transferring fairness with a fair consistency regularization as the key component. 
\textbf{(4)} Experimentally, we evaluate our method on synthetic data, real image data, and real tabular data. All results show the effectiveness of our approach in transferring fairness.

\section{Preliminaries and Notations}\label{sec:prelim}

\textbf{Transfer Fairness.}
Let $X, A, Y$ and $\Xcal, \Acal, \Ycal$ denote random variables and sample space of input features, sensitive attribute, and label.
For simplicity, we assume binary sensitive attribute and binary classification, while our method can easily extend to multi-sensitive attributes and multi-class cases (see Appendix \ref{app:add-exp}).
We aim to learn a classifier $g: \Xcal \rightarrow \Ycal$ and are interested in its fairness under distribution shifts. 
Specifically, with $S$ and $T$ denoting source and target domains, we study how to transfer fairness and accuracy when $\Pbb_S(X, A, Y) \neq \Pbb_T(X, A, Y)$, with the
access to $X, A, Y$ in the source domain, but only $X, A$ in the target domain.
In the self-training algorithm, we use $g_{tc}$ to denote a teacher classifier, and $g^*$ to denote the oracle classifier. 
We use the word ``group'' to denote the set of data that has the same label and sensitive attribute. 

\textbf{Fairness Metric.} Since we consider classification problems in this paper, we expect the fairness metrics could encourage models to achieve similar classification performance across groups. We use two metrics in this paper, \textit{equalized odds} and \textit{variance of group accuracy}. \textit{Equalized odds} \cite{hardt2016equality} is a widely used unfairness metric in classification problems that requires the true positive rate and the true negative rate to be the same among groups. It is defined as 
$\Delta_{odds}= \frac{1}{2}\sum_{y=0}^1\big|\Pbb(\hat{Y}=y | A=0, Y=y) - \Pbb(\hat{Y}=y | A=1, Y=y)\big|$, 
where $\hat{Y}=g(X)$ is the prediction. 
Additionally, we also evaluate the \textit{variance of group accuracy} which is defined as $V_{acc}=Var(\{\Pbb(\hat{Y}=y|A=a, Y=y), \forall a,y\})$. Smaller $V_{acc}$ indicates the model is fairer since it performs similarly across groups. 
Note that the variance of group accuracy can help avoid trivial fairness where a model with constant output has $\Delta_{odds}=0$, but such fairness is meaningless.

\section{Fairness under Distribution Shifts}\label{sec:shift}
In this section, we provide a fine-grained analysis of fairness under various types of distribution shifts based on a unified framework of distribution shift characterization.

\textbf{A Unified Framework to Characterize Distribution Shifts.} 
Following \cite{wiles2022a}, we characterize distribution shifts by assuming a unified latent variable model for the underlying data generation process. 
We denote the underlying factors as $Y^1, Y^2, ..., Y^K$, and data point as $X$. 
Two of the factors are label $Y^l$ (i.e. $Y$) and sensitive attribute $Y^a$ (i.e. $A$). We call other factors \textit{nuisance factors} since they are irrelevant to the classification task. 
\begin{assumption}\label{asum:under}
(Underlying data generation process) We assume the data is generated from a latent generative model as $\by^{1:K}\sim \Pbb(Y^{1:K})$ and $\bx \sim \Pbb(X|Y^{1:K}=\by^{1:K})$. The generative model is fixed $\Pbb_S(X|Y^{1:K}=\by^{1:K})=\Pbb_T(X|Y^{1:K}=\by^{1:K})$ but the marginal distribution of factors varies in two domains $\Pbb_S(Y^{1:K})\neq\Pbb_T(Y^{1:K})$, causing the distribution shift $\Pbb_S(Y^{1:K},X)\neq \Pbb_T(Y^{1:K},X)$.

\end{assumption}
It is realistic to assume two domains share the same data generation process. For example, the underlying physical process of cell imaging is fixed, while the distribution of underlying factors (e.g. \textit{gender}, \textit{age} or \textit{equipment}) may vary in two hospitals (i.e. two domains), resulting in the distribution shift of the observed tissue images. Based on the unified framework, we consider two major types of distribution shifts, namely \textit{subpopulation shift} and \textit{domain shift}, which are widely considered in many practical applications \cite{koh2021wilds}.

\begin{definition}(Subpopulation shift)
We say it is a subpopulation shift, if for any factor $Y^i$, the sample space of it remains the same in two domains (i.e., $ \Ycal^i_S= \Ycal^i_T$), but the marginal distribution of at least one factor changes (e.g., $\Pbb_S(Y^{j})\neq \Pbb_T(Y^{j})$), resulting in $\Pbb_S(Y^{1:K})\neq \Pbb_T(Y^{1:K})$ and $\Pbb_S(Y^{1:K},X)\neq \Pbb_T(Y^{1:K},X)$. 
\end{definition}
\begin{definition}(Domain shift) We say it is a domain shift, if at least one nuisance factor $Y^i, i\neq l, i\neq a$, has different sample space in two domains, $\exists y^i\in \Ycal^i_T$, but $y^i \notin \Ycal^i_S$, resulting in $\Pbb_S(Y^{1:K})\neq \Pbb_T(Y^{1:K})$ and $\Pbb_S(Y^{1:K},X)\neq \Pbb_T(Y^{1:K},X)$.
\end{definition}
Intuitively, under subpopulation shift, the sample space overlaps, and only the marginal distributions of factors vary in the two domains. 
For example, the proportion of females versus males in training and deployment time differs. 
In contrast, under domain shift, the source model has never seen the data with factor values that only exist in the target domain. For instance, the source model is unaware of the equipment used for cell imaging at deployment time. 

\textbf{Why do distribution shifts cause unfairness?} Suppose the marginal distributions of a binary nuisance factor $Y^i$ differ in two domains with $\Pbb_S(Y^i)\neq \Pbb_T(Y^i)$. The unfairness in two domains are
\begin{align}
    \Delta_{odds}^S&=\Pbb_S(Y^i=0) \times \Delta_{odds}^S|_{Y^i=0} + \Pbb_S(Y^i=1) \times \Delta_{odds}^S|_{Y^i=1} \label{eq:unfair}\\
    \Delta_{odds}^T&=\Pbb_T(Y^i=0) \times \Delta_{odds}^T|_{Y^i=0} + \Pbb_T(Y^i=1) \times \Delta_{odds}^T|_{Y^i=1}. \notag
\end{align}
Due to the same generation process where $\Pbb_S(X|Y^i=y^i)=\Pbb_T(X|Y^i=y^i)$, we have $\Delta_{odds}^S|_{Y^i=y^i}=\Delta_{odds}^T|_{Y^i=y^i}$, $\forall y^i\in\{0,1\}$. 
Under subpopulation shift, $Y^i$ has the same sample space in two domains but with different proportions (e.g., ${\Pbb_S(Y^i=0)=0.9}, {\Pbb_S(Y^i=1)=0.1}, \Pbb_T(Y^i=0)=0.1, {\Pbb_T(Y^i=1)=}0.9$), while under domain shift the sample space differs (e.g. $\Pbb_S(Y^i=0)=1, \Pbb_T(Y^i=1)=1$). 
It is easy to see from~\eqref{eq:unfair} that if a model is perfectly fair on data with $Y^i=0$ but unfair on data with $Y^i=1$, then the model is highly fair in the source domain but highly unfair in the target domain under both cases. 
Therefore, if the model has inconsistent performance on data generated from different nuisance factor values, then the shifted marginal distribution of those factors may cause fairness collapse.

\textbf{How to transfer fairness under distribution shifts?} 
Based on the above analysis, one way is to train the model to be fair under any values of factors. It is possible under subpopulation shift as stated in the following proposition (see proof and discussion in Appendix \ref{app:shift}).
\begin{proposition}\label{prop:subshift}
(Transfer fairness under subpopulation shift)
Consider the subpopulation shift that is caused by the shifted marginal distribution of nuisance factor $Y^i$ (i.e., $\Pbb_S(Y^i)\neq \Pbb_T(Y^i)$), while $\Ycal_S^i=\Ycal_T^i=\Ycal^i$. If model $f$ is strictly fair in source domain under any value of factor $Y^i$ satisfying
$\Pbb_S(g(X)=y^l | Y^a=0, Y^l=y^l, Y^i=y^i)=\Pbb_S(g(X)=y^l | Y^a=1, Y^l=y^l, Y^i=y^i), \forall y^i\in \Ycal^i, y^l\in\{0,1\}$, 
then model $g$ is also fair in target domain with $\Delta_{odds}=0$.
\end{proposition}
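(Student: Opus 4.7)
The plan is to piggyback on the decomposition already derived in equation~(1): for the factor $Y^i$ the target unfairness splits as $\Delta_{odds}^T=\sum_{y^i\in\Ycal^i}\Pbb_T(Y^i=y^i)\,\Delta_{odds}^T|_{Y^i=y^i}$ (the binary statement in (1) extends immediately to the general sample space $\Ycal^i$ since it is just the law of total probability applied to each of the two terms in the definition of $\Delta_{odds}$). So I would reduce the proposition to showing that every conditional term $\Delta_{odds}^T|_{Y^i=y^i}$ is zero, which is a clean, finite verification once we have a way to transport the source-side fairness condition into the target.

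The transport step is what I would handle next, and it is essentially the fact the paper states informally just before equation~(1). Because the generative process $\Pbb(X\mid Y^{1:K})$ is fixed across domains by Assumption~3.1, and because the present subpopulation shift is confined to the marginal of $Y^i$, one obtains the invariance $\Pbb_S(g(X)=y^l\mid Y^a=a,\,Y^l=y^l,\,Y^i=y^i)=\Pbb_T(g(X)=y^l\mid Y^a=a,\,Y^l=y^l,\,Y^i=y^i)$ for every $a,y^l,y^i$. In particular $\Delta_{odds}^S|_{Y^i=y^i}=\Delta_{odds}^T|_{Y^i=y^i}$ for every $y^i$. The hypothesis of the proposition gives $\Delta_{odds}^S|_{Y^i=y^i}=0$ for each $y^i\in\Ycal^i$, so after transport each summand of the decomposition vanishes and $\Delta_{odds}^T=0$, which is the claim.

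The main obstacle I anticipate is justifying the transport step rigorously rather than by appeal to the informal statement preceding (1). Assumption~3.1 only fixes $\Pbb(X\mid Y^{1:K})$, so the identity $\Pbb_S(X\mid Y^a,Y^l,Y^i)=\Pbb_T(X\mid Y^a,Y^l,Y^i)$ that underlies the transport really requires that the conditional law of the remaining nuisance factors given $(Y^a,Y^l,Y^i)$ is also unchanged across the two domains. I would make this explicit as part of the ``only the marginal of $Y^i$ shifts'' reading of the hypothesis, either by adopting the stronger invariance as the operational definition of the subpopulation shift under consideration, or by first conditioning on the full factor vector $Y^{1:K}$, applying Assumption~3.1 to equate the source and target conditionals of $X$, and then marginalizing out the other factors against a common conditional law. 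Once this is in place, everything else is a one-line substitution into the decomposition.
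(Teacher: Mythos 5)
Your proposal is correct and follows essentially the same route as the paper's Appendix~B proof: decompose the target $\Delta_{odds}$ over the values of $Y^i$ via total probability and factor independence, then transport the conditional prediction probabilities from source to target exactly as in the paper's Lemma~B.1 (condition on the full factor vector, apply Assumption~3.1, and marginalize the remaining factors against their unchanged law). The ``main obstacle'' you flag is precisely what Lemma~B.1 resolves, and your proposed fix is the paper's own argument.
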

Our empirical results (Figure~\ref{fig:3dshapes}) also support this finding.
However, domain shift is more challenging. The source model's performance on target data is unpredictable due to the distinct sample space. 
One promising way to tackle domain shift is to enforce the model's invariance to nuisance factors so that the source model would have the same behavior on target data.
Note that this solution also works for subpopulation shift since it leads to the case in Proposition~\ref{prop:subshift} directly. 
The above analysis motivates us to transfer fairness by encouraging consistent fairness under different nuisance factor values.

\section{Transfer Fairness via Fair Consistency Regularization}
\looseness=-1

\subsection{Theoretical Analysis: A Sufficient Condition for Transferring Fairness}
In reality, distribution shifts are usually hybrid, and we may not know all the underlying factor values. In this section, we consider a general case where we only have access to input $X$, label $Y$, and sensitive attribute $A$. We use data transformations to simulate the shift of nuisance factors. 
Our theory is based on \cite{colin2021} and \cite{tianle2021} which prove that encouraging consistency under transformations can propagate labels so that to transfer accuracy. In this section, we find that in order to transfer fairness, we need a fair label propagation process that requires the model to have similar consistency across groups. 
We introduce assumptions and our findings as follows.

\begin{assumption}
[Separability of the input] 
Let $S_a^y$ and $T_a^y$ denote the sample space of $X|_{A= a, Y=y}$ in source and target domains. The ground truth class and sensitive attribute for $\bx\in S_a^y \cup T_a^y$ are consistent, which are $y\in\{0,1\}$ and $a\in\{0,1\}$.
We assume the sample spaces of $X$ in two domains are $S = \cup_{y}\cup_{a}S_a^y$ and $T = \cup_{y}\cup_{a}T_a^y$, where groups are separated with
\begin{inparaenum}[1)]
\item $S_a^y \cap S_{a'}^y = T_a^y \cap T_{a'}^y = S_a^y \cap T_{a'}^y = \emptyset, \forall y, a \neq a'$, and
\item $S_a^y \cap S_{a'}^{y'} = T_a^y \cap T_{a'}^{y'} = S_a^y \cap T_{a'}^{y'} = \emptyset, \forall a, a', y \neq y'$. 
\end{inparaenum}
\end{assumption}\label{asum:data}

This is a realistic assumption as illustrated in Figure~\ref{fig:overview} where the data from two domains are from the same underlying conditional distribution $X|_{Y,A}$, and groups are separated by label and sensitive attribute.
We define $U_a^y = \frac{1}{2}(S_a^y+T_a^y)$ as the group distribution, and $U$ as the population distribution on the entire data.
Next, we characterize the good continuity of group distributions with the definition of \textit{neighbor} and \textit{intra-group expansion} assumption.
\begin{definition} 
[Neighbor] Let $\Tcal$ denote a set of input transformations and define the transformation set of $\bx$ as $\Bcal(\bx)\triangleq\{\bx'| \exists t \in \Tcal, \textnormal{s.t. } \|\bx'-t(\bx)\|\leq r\}$. For any $\bx\in S_a^y \cup T_a^y$, we define the neighbor of $\bx$ as
$\Ncal(\bx) := (S_a^y \cup T_a^y) \cap \{\bx' | \Bcal(\bx)\cap \Bcal(\bx') \neq \emptyset\}$ and define the neighbor of a set $V\in \Xcal$ as $\Ncal(V) := \cup_{\bx\in V \cap (\cup_{y}\cup_{a}S_a^y \cup T_a^y)}\Ncal(x)$.
\end{definition}\label{def:trans}
Intuitively, two examples are neighbors if they are near each other after applying some transformations.
Note that we only consider neighbors that have the same class and sensitive attribute (i.e., from the same group). 
Based on this definition, we characterize the continuity of group distribution with \textit{intra-group expansion} assumption where any small set has a large neighbor in its group.
\begin{assumption}
[Intra-group expansion] We say that $U_a^y$ satisfies $(\alpha, c)$-multiplicative expansion for some constant $\alpha\in (0,1)$ and $c>1$, if for all $V \subset U_a^y$ with $\Pbb_{U_a^y}(V)\leq \alpha$, the following holds: 
\begin{align*}
    \Pbb_{U_a^y}(\Ncal(V))\geq \min\{c\Pbb_{U_a^y}(V),1\}.
\end{align*}
\end{assumption}
Different from the \textit{expansion} assumption proposed in \cite{colin2021} which considers the class continuity, \textit{intra-group expansion} assumes group continuity.
As shown in Figure~\ref{fig:overview}, this is more realistic since groups are separated by both label and sensitive attribute. 
We can also interpret it as the transformations that change the value of nuisance factors will generate neighbors within the same group.

This assumption allows us to propagate labels within the group from one domain to another by encouraging consistency under transformations. 
We use $R_{U_a^y}(g) \triangleq \Pbb _{U_a^y}[\exists\bx'\in \Bcal(\bx), \textnormal{s.t. } g(\bx)\neq g(\bx')]$ to denote the \textit{consistency loss} of classifier $g$ on the group distribution $U_a^y$, which is the fraction of examples where $g$ is not robust to input transformations.
Since we only have partial supervision (i.e., no labels in the target domain), we use a self-training framework to obtain a model that is accurate and fair in both domains (i.e., on $U_a^y$). 
Based on the theory of self-training in \cite{colin2021}, we derive a sufficient condition in Theorem~\ref{th:main} that bounds the unfairness and error on the population distribution. 
We use 0-1 loss to evaluate the \textit{error} of $g$ as $\varepsilon_{U_a^y}(g) \triangleq \Pbb_{U_a^y}[g(\bx)\neq g^*(\bx')]$, and the \textit{disagreement} between $g$ and a teacher classifier $g_{tc}$ as $ L_{U_a^y}(g, g_{tc})\triangleq \Pbb_{U_a^y}[g(\bx)\neq g_{tc}(\bx')]$.

\begin{theorem} 
[Guarantee fairness]
Suppose we have a teacher classifier $g_{tc}$ with bounded unfairness such that $|\varepsilon_{U_a^y}(g_{tc})-\varepsilon_{U_{a'}^{y'}}(g_{tc})|\leq \gamma, \forall a,a'\in \Acal$ and $y,y'\in \Ycal$.
We assume intra-group expansion where $U_a^y$ satisfies $(\bar{\alpha}, \bar{c})$-multiplicative expansion and $\varepsilon_{U_a^y}(g_{tc})\leq\bar{\alpha}<1/3$ and $\bar{c}>3, \forall a,y$. We define $c\triangleq\min\{1/\bar{\alpha}, \bar{c}\}$, and set $\mu \leq \varepsilon_{U_a^y}(g_{tc}), \forall a,y$.
If we train our classifier with the algorithm
\begin{align}
    \min_{g\in G}\max_{a, y} R_{U_a^y}(g), \quad\quad
    \textnormal{s.t. } \quad L_{U_a^y}(g, g_{tc})\leq \mu \quad \forall a,y \notag
\end{align}
then the error and unfairness of the optimal solution $\hat{g}$ on the distribution $U$ are bounded with
\begin{align}
    \varepsilon(\hat{g}) &\leq \frac{2}{c-1} \varepsilon_U(g_{tc}) + \frac{2c}{c-1}R_U(\hat{g}), \\
    \Delta_{odds}(\hat{g}) &\leq \frac{2}{c-1}(\gamma + \mu + c\max_{a, y} R_{U_a^y}(\hat{g})) 
\end{align}
\end{theorem}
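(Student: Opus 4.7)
The plan is to port the self-training analysis of \cite{colin2021,tianle2021} from the class level to the group level. Because the intra-group expansion assumption plays exactly the role that class-level expansion plays in those papers---every small subset of a group has a strictly larger neighborhood within the same group---the mass-accounting arguments carry over almost verbatim, producing a per-group analogue of the population-level self-training bound. From there, the first stated inequality follows by averaging across groups, and the equalized-odds inequality follows by pivoting through the teacher classifier and invoking the teacher's bounded fairness deficit $\gamma$.

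\textbf{Step 1: per-group error bound.} I would first show that, for every group $(a,y)$,
\[
\varepsilon_{U_a^y}(\hat g) \;\leq\; \frac{2}{c-1}\varepsilon_{U_a^y}(g_{tc}) + \frac{2c}{c-1}R_{U_a^y}(\hat g).
\]
The argument mirrors \cite{colin2021}. Let $E_a^y=\{\bx\in S_a^y\cup T_a^y:\hat g(\bx)\neq g^*(\bx)\}$ be the error set of $\hat g$ on this group. Any neighbor of a point in $E_a^y$ stays inside the group by the intra-group definition of $\Ncal$, and either falls back inside $E_a^y$, triggers non-robustness of $\hat g$ (mass controlled by $R_{U_a^y}(\hat g)$), or lies where $g_{tc}$ is itself wrong (mass controlled by $\varepsilon_{U_a^y}(g_{tc})$). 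Applying $(\bar\alpha,\bar c)$-multiplicative expansion to $E_a^y$ and splitting into the cases $\Pbb_{U_a^y}(E_a^y)\le\bar\alpha$ and $\Pbb_{U_a^y}(E_a^y)>\bar\alpha$---which is precisely where $c=\min\{1/\bar\alpha,\bar c\}$ enters, and where the thresholds $\bar\alpha<1/3$ and $\bar c>3$ are needed to invert the resulting inequalities---yields the displayed bound. Averaging against $\Pbb_U(A=a,Y=y)$ then gives the first bound in the theorem.

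\textbf{Step 2: equalized-odds bound.} Under the separability assumption, $\Pbb(\hat Y=y\mid A=a,Y=y)=1-\varepsilon_{U_a^y}(\hat g)$, so
\[
\Delta_{odds}(\hat g)=\tfrac{1}{2}\sum_{y=0}^{1}\bigl|\varepsilon_{U_0^y}(\hat g)-\varepsilon_{U_1^y}(\hat g)\bigr|.
\]
For a fixed $y$ I would pivot through the teacher,
\[
\varepsilon_{U_a^y}(\hat g)-\varepsilon_{U_{a'}^y}(\hat g) = \bigl[\varepsilon_{U_a^y}(\hat g)-\varepsilon_{U_a^y}(g_{tc})\bigr] + \bigl[\varepsilon_{U_a^y}(g_{tc})-\varepsilon_{U_{a'}^y}(g_{tc})\bigr] + \bigl[\varepsilon_{U_{a'}^y}(g_{tc})-\varepsilon_{U_{a'}^y}(\hat g)\bigr],
\]
bound the middle term by $\gamma$ by hypothesis, and for each outer term I would use a refined version of the Step~1 accounting that keeps the disagreement $L_{U_a^y}(\hat g,g_{tc})\le\mu$ and the consistency loss $R_{U_a^y}(\hat g)$ as distinct contributions. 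This should produce a per-group estimate of the shape $|\varepsilon_{U_a^y}(\hat g)-\varepsilon_{U_a^y}(g_{tc})|\le\tfrac{1}{c-1}(\mu+c\,R_{U_a^y}(\hat g))$. Summing the three pieces, maxing over $(a,y)$, and using symmetry in $a\leftrightarrow a'$ then gives the advertised $\frac{2}{c-1}(\gamma+\mu+c\max_{a,y}R_{U_a^y}(\hat g))$.

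\textbf{Main obstacle.} The delicate part is the refined per-group bound required in Step~2. The standard analyses in \cite{colin2021,tianle2021} lump teacher errors and student--teacher disagreement into a single ``pseudolabel error'' quantity, which would force $\mu$ and $\varepsilon_{U_a^y}(g_{tc})$ to enter with the same coefficient and leave no room to exploit the teacher's fairness deficit $\gamma$ separately from the teacher's absolute error level. To obtain the cleaner form in the theorem, the expansion-based mass accounting must be redone with the disagreement set $\{\hat g\neq g_{tc}\}\cap(S_a^y\cup T_a^y)$ treated as a bad set distinct from the teacher-error set; verifying that the inequality still closes under $\bar\alpha<1/3$ and $\bar c>3$ when these two bad sets are combined via a union bound is where the main technical bookkeeping lies.
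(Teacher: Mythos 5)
Your Step 1 and your reduction of $\Delta_{odds}(\hat g)$ to $\tfrac{1}{2}\sum_y\bigl|\varepsilon_{U_0^y}(\hat g)-\varepsilon_{U_1^y}(\hat g)\bigr|$ match the paper, which simply invokes the group-level analogue of Lemma A.8 of \cite{colin2021}, namely $\varepsilon_{U_a^y}(g)\le\frac{c+1}{c-1}L_{U_a^y}(g,g_{tc})+\frac{2c}{c-1}R_{U_a^y}(g)-\varepsilon_{U_a^y}(g_{tc})$, specializes it using $L_{U_a^y}(\hat g,g_{tc})\le\mu\le\varepsilon_{U_a^y}(g_{tc})$, and averages over groups. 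The gap is in Step 2. Your symmetric pivot through the teacher necessarily charges the middle term $\varepsilon_{U_a^y}(g_{tc})-\varepsilon_{U_{a'}^y}(g_{tc})$ at coefficient $1$, so the best it can yield is $\Delta_{odds}(\hat g)\le\gamma+(\cdots)$; since $\frac{2}{c-1}<1$ when $c>3$, this is strictly weaker than the claimed $\frac{2}{c-1}\gamma+(\cdots)$ and the stated bound is not recovered. Moreover, the per-group deviation estimate $\bigl|\varepsilon_{U_a^y}(\hat g)-\varepsilon_{U_a^y}(g_{tc})\bigr|\le\frac{1}{c-1}(\mu+cR_{U_a^y}(\hat g))$ that you posit is not justified: the expansion machinery controls the upward deviation by roughly $\frac{2c}{c-1}R_{U_a^y}(\hat g)$ and the downward deviation only by $L_{U_a^y}(\hat g,g_{tc})\le\mu$ via the triangle inequality, with no mechanism producing the $\frac{1}{c-1}$ damping of $\mu$.

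The paper avoids both problems by combining the bounds asymmetrically: for $\varepsilon_{U_a^y}(\hat g)-\varepsilon_{U_{a'}^y}(\hat g)$ it applies the expansion \emph{upper} bound to the first term and only the elementary \emph{lower} bound $\varepsilon_{U_{a'}^y}(\hat g)\ge\varepsilon_{U_{a'}^y}(g_{tc})-L_{U_{a'}^y}(\hat g,g_{tc})$ to the second. The key point is that the upper bound carries the teacher error with coefficient $\frac{2}{c-1}<1$ (equivalently, the $-\varepsilon_{U_a^y}(g_{tc})$ term in Lemma A.8 survives), so $\frac{2}{c-1}\varepsilon_{U_a^y}(g_{tc})-\varepsilon_{U_{a'}^y}(g_{tc})\le\frac{2}{c-1}\gamma-\frac{c-3}{c-1}\varepsilon_{U_{a'}^y}(g_{tc})$, and the leftover negative term absorbs most of $L_{U_{a'}^y}(\hat g,g_{tc})$ through $L\le\mu\le\varepsilon_{U_{a'}^y}(g_{tc})$, leaving exactly $\frac{2}{c-1}(\gamma+\mu)+\frac{2c}{c-1}R_{U_a^y}(\hat g)$; taking the maximum over the two orderings of $(a,a')$ and averaging over $y$ gives the stated constant. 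Your ``main obstacle'' is therefore misdiagnosed: the difficulty is not separating teacher error from student--teacher disagreement inside the expansion argument (Lemma A.8 already keeps them separate), but exploiting the $-\varepsilon(g_{tc})$ slack via this asymmetric upper/lower pairing.
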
\label{th:main}
\textbf{Remark.} This sufficient condition suggests we fit a teacher classifier which is fair on the population distribution and minimize the \textit{consistency loss} in every group. 
The unfairness of the resulting model is bounded by the quality (unfairness and error) of the teacher classifier and the worst-group consistency loss.
Intuitively, we can understand the consistency loss as the model invariance to the nuisance factors.
With a group-balanced consistency loss, the model would have similar invariance to the nuisance factors resulting in similar group performance on the unseen data so that to transfer accuracy and fairness. 
We also bound the variance of group accuracy with the variance of consistency loss (Appendix \ref{app:proof}). Both bounds suggest we balance and minimize the consistency loss across groups.

\vspace{-1em}
\subsection{Practical Algorithm: Fair Consistency Regularization}
\vspace{-1em}
There are two challenges in realizing the theoretical algorithm in Theorem~\ref{th:main}. First, we need a high-quality teacher model, but the model trained with labeled source data is only fair and accurate in the source domain. 
Second, existing consistency regularization methods do not consider fairness.
We tackle the first problem by leveraging the iterative self-training paradigm that updates the teacher model with the student model while training, thus making it fairer and fairer.
We tackle the second problem by proposing a novel fair consistency regularization.

\begin{wrapfigure}{r}{0.59\textwidth}
\captionsetup{font=footnotesize}
    \centering
    \includegraphics[width=\textwidth]{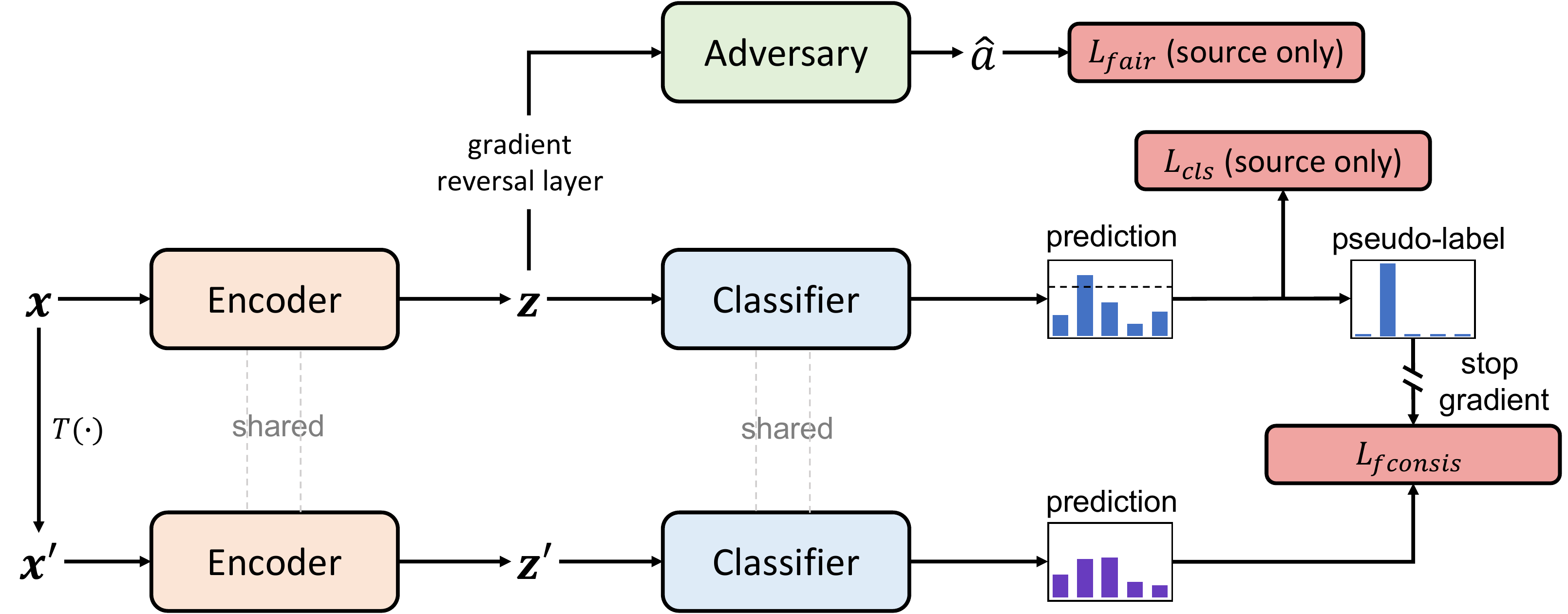}
    \caption{\small{Training diagram.}}\label{fig:archi}
\end{wrapfigure}
\textbf{Algorithm.}
Figure~\ref{fig:archi} shows the overall training diagram.
There are three major components:\\
\textbf{(1)} In every training epoch, we use the student model obtained in the last epoch as the teacher model and automatically fit the teacher model by initializing the student model to be the same as the teacher model. In other words, only one model is training itself iteratively.\\
\textbf{(2)} To ensure the accuracy and fairness in the source domain, we adopt Laftr \cite{david_learning_fair}, an adversarial learning method consisting of a classification loss $L_{cls}$ and a fairness loss $L_{fair}$.\\
\textbf{(3)} To transfer fairness and accuracy, we do consistency training on all unlabeled data (including source and target data). Following FixMatch \cite{sohn2020fixmatch}, we use the pseudo-labels generated by the teacher model as supervision for consistency training where the model should have consistent predictions under transformations. Different from FixMatch, we propose a fair consistency regularization with a balanced group consistency loss $L_{fconsis}$. \\
We train the model with the weighted summation of these three losses as shown in Figure~\ref{fig:archi}. We defer the detailed loss functions of $L_{cls}$ and $L_{fair}$ with a detailed algorithm description to Appendix \ref{app:exp}.

\textbf{Fair Consistency Regularization.}
To tighten the upper bound of the unfairness in Theorem \ref{th:main}, we need to minimize and balance consistency loss across groups. 
However, the consistency regularization in FixMatch \cite{sohn2020fixmatch} does not distinguish groups and might amplify the bias as observed in \cite{zhu2022the} and our experiments.
Instead, we propose to use a fair consistency regularization that evaluates the consistency loss per group and minimizes the balanced consistency loss $L_{fconsis}$ defined as below.
\begin{align}
\setlength{\abovedisplayskip}{-2pt}
\setlength{\belowdisplayskip}{-2pt}
    &L_{fconsis}(g) = \sum_{y=0}^1 \sum_{a=0}^1 \lambda_a^y L_a^y(g)\\
    \text{where}\quad &L_a^y(g)=\frac{1}{\sum_{\bx_a^y}\mathds{1}}\sum_{\bx_a^y} \mathds{1}(\max(g_{tc}(\bx_a^y))\geq \tau)H(\argmax(g_{tc}(\bx_a^y)), g(t(\bx_a^y))) \label{eq:L_consis}
\end{align}
where $\bx_a^y$ denotes an input with sensitive attribute $A=a$ and class $Y=y$.
$L_a^y(g)$ is model $g$'s consistency in the group of $\{\bx_a^y\}$,
and $\lambda_a^y$ is the corresponding weight of the group consistency loss.
Here, we abuse the usage of $g(\bx)$ to denote the output logits of model $g$ on input $\bx$ and thus, $\argmax(g_{tc}(\bx_a^i))$ is the pseudolabel generated by teacher classifier. $t(\bx_a^y)$ is the transformed input as defined in Definition~\ref{def:trans}. 
We use a cross-entropy loss $H(\cdot)$ to encourage the consistency under transformation $t(\cdot)$ and only consider examples that the teacher model has high confidence in with a confidence threshold $\tau$. 
Note that data is classified into groups according to the true sensitive attribute and pseudolabels. 
To balance the group consistency loss, we propose to weigh each group inversely with the number of confident pseudolabels, and set $\lambda_a^y$ as
\begin{align}\label{eq:coeff}
    \hat{\lambda}_a^y = \frac{1}{\sum_{\bx_a^y}\mathds{1}(\max(g_{tc}(\bx_a^y))\geq \tau)}, \quad \lambda_a^y=\hat{\lambda}_a^y/\sum_{a,y}\hat{\lambda}_a^y.
\end{align}
The weights will dynamically change while training. 
Heuristically, if the teacher model is only confident in a few examples in a group, the model's consistency in this group is more likely to be low.
With the proposed weights, a larger penalty will be applied to such groups.
Therefore, the proposed fair consistency regularization will enforce the model to pay more attention to high-error groups. 
By doing so, the trained model would enjoy similar consistency loss across groups. Together with the self-training algorithm, it would have similar accuracy across groups in the target domain.

\section{Related Work}
\vspace{-0.5em}
This section features related work for transferring fairness. Another discussion of related work in fair machine learning, domain adaptation, and self-training is deferred to Appendix~\ref{app:sec:related}.
Out-of-distribution fairness remains an under-explored area. We categorize prior works into five classes. 
\begin{inparaenum}[1)]
    \item \textit{Group-wise distribution matching}. 
\cite{candice_transfer} derives an upper bound for fairness in the target domain which suggests training a fair model in the source domain and matching the distributions of relevant groups from two domains in feature space at the same time. \cite{yoon_wasser} also applies group-wise distribution matching but with Wasserstein distance. Such methods are hard to achieve if we do not have supervision in the target domain and it also shares the drawback of distribution matching methods.
\item \textit{Reweighting}. When the proportions of groups differ in two domains, reweighting the examples in the source domain can approximate the target distribution.  \cite{coston_missing} uses reweighting to deal with fairness problems under covariate shift and \cite{giguere2022fairness} uses reweighting together with a fairness test to guarantee fairness under demographic shift. Reweighting methods strongly rely on the support cover assumption which is not satisfied under domain shift.
\item \textit{Distributionally robust optimization (DRO)}. This line of work considers unknown target data that can be any arbitrary weighted combinations of the source dataset and train a fair model that is robust to the worst-case shift \cite{rezaei_robust, mandal2020ensuring}. These methods also assume subpopulation shift instead of domain shift.
\item \textit{Causal inference}. \cite{singh_violate} conducts causal domain adaptation and DRO based on a well-characterized causal graph that describes the data construction and distribution shift. Causal methods highly rely on the correct causal graph which is hard to obtain in reality. For example, \cite{schrouff2022maintaining} finds that the causal graph in real applications (e.g. predicting the skin condition in dermatology) is far more complicated which violates normal assumptions, thus making those approaches inapplicable.
\item \textit{Others}. \cite{chen2022fairness} derives bound for fairness violation under distribution shifts. There are also studies that aim to maintain fairness under distribution shifts through online learning \cite{zhang2021farf}, and loss curvature matching \cite{wang2022equalized}.
\end{inparaenum}
To the best of our knowledge, this is the first work that uses self-training to transfer fairness. Some work also studies self-supervised learning and fairness, yet they use unlabeled data and self-training to improve the in-distribution fairness \cite{chzhen2019leveraging, 9117188, chakraborty2021can} which is different from our goal.

\section{Experiments}
\begin{figure}
    \centering
    \includegraphics[width=\textwidth]{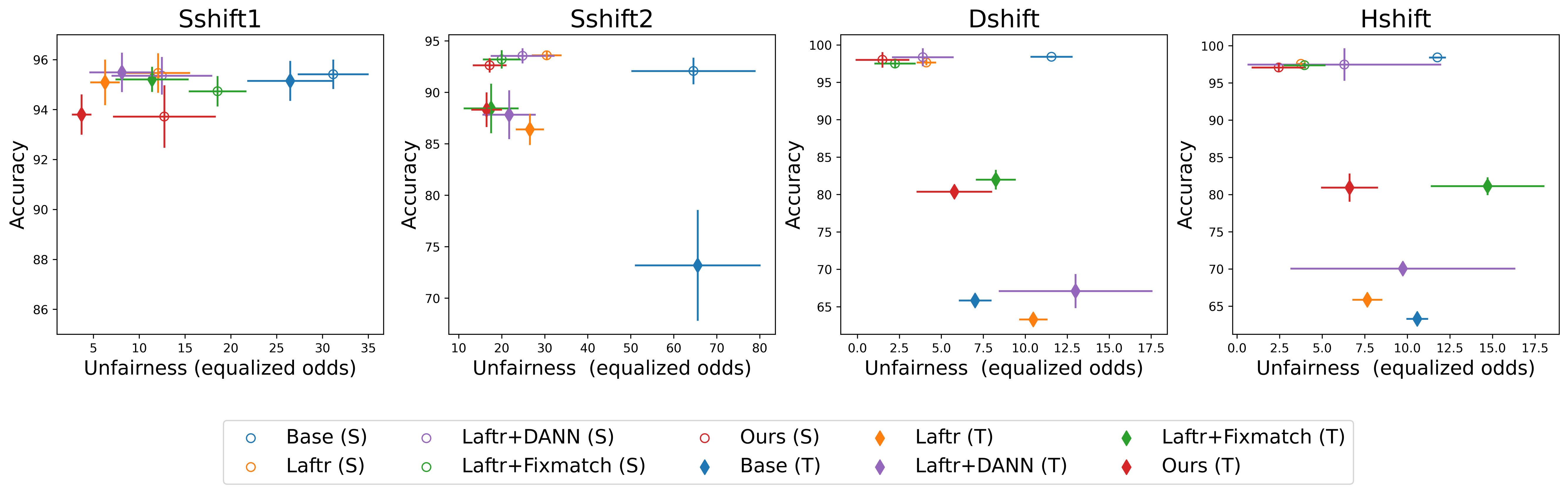}
    \vspace{-1.8em}
    \caption{Accuracy and unfairness (error bar denotes the standard deviation) in two domains under subpopulation shifts (Sshift 1, Sshift 2), domain shift (Dshift), and hybrid shift (Hshif). (S) and (T) denotes the evaluation in the source and target domains respectively. Results show that domain shift is more challenging than subpopulation shift, and our method can effectively transfer accuracy and fairness under all the distribution shifts considered. }
    \label{fig:3dshapes}
\end{figure}
\vspace{-0.5em}
\subsection{Evaluation under Different Types of Distribution Shifts with a Synthetic Dataset}\label{exp:syn}
In order to study the fairness under distribution shifts and verify our theoretical findings, we develop a synthetic dataset to simulate different types of distribution shifts. 

\textbf{Synthetic dataset.} The synthetic dataset is adapted from the 3dshapes dataset \cite{pmlr-v80-kim18b} which contains images of 3D objects generated from six independent latent factors (\textit{shape}, \textit{object hue}, \textit{scale}, \textit{orientation}, \textit{floor hue}, \textit{wall hue}). 
This dataset satisfies our assumption on the shared underlying data generation process. 
We simulate different types of distribution shifts by varying the marginal distributions of the latent factors and sample the data accordingly (see Appendix~\ref{app:3dshapes} for details).

\textbf{Distribution shifts.} 
We set the image as input $X$, and select three latent factors to be class ($Y=\textit{shape}$), sensitive attribute ($A=\textit{object hue}$), and a nuisance factor that might shift ($D=\textit{scale}$). 
We consider four widely observed distribution shifts in reality ( $\Pbb_S(X, Y,A,D)\neq \Pbb_T(X, Y,A,D)$): \\
(1) \textbf{Sshift 1}: Subpopulation shift where only the nuisance factor shift (i.e. more small objects in source but more large objects in target), $\Pbb_S(Y,A)=\Pbb_T(Y,A)$, $\Pbb_S(D)\neq \Pbb_T(D)$.\\
(2) \textbf{Sshift 2}: Subpopulation shift where $A$ and $Y$ have different correlations in two domains (i.e. most red objects are cubes in source but are capsules in target), $\Pbb_S(Y,A)\neq \Pbb_T(Y,A)$, $\Pbb_S(D)= \Pbb_T(D)$.\\
(3) \textbf{Dshift}: Domain shift where the nuisance factor has different sample spaces (i.e. only small objects in source but only large objects in target), $\Pbb_S(Y,A)=\Pbb_T(Y,A)$, $\Pbb_S(D)\neq \Pbb_T(D)$, $\Ycal^d_S \neq \Ycal^d_T$.\\
(4) \textbf{Hshift}: Hybrid shift of (2) and (3).

\textbf{Baselines.} We do shape classification task with an MLP model and compare our method with four baselines: 
Base (standard ERM); Laftr; Laftr+DANN (a combination of Laftr and a domain adaptation method \cite{ganin2016domain}); Laftr+FixMatch. 
In our method, we also use Laftr and FixMatch but with the proposed fair consistency regularization.
Since the shifted nuisance factor is \textit{scale}, we use random padding and cropping as transformations in our method and Laftr+FixMatch. We train Base and Laftr with labeled source data and train others with unlabeled target data as well.

\looseness=-1
\textbf{Domain shift is more challenging than subpopulation shift.}  Figure~\ref{fig:3dshapes} shows that under subpopulation shifts, 
the fair source model trained with Laftr also has high accuracy and fairness in the target domain although it has not seen any target date. 
This is because the sample space is shared (e.g. small and large objects both exist in the source data), and the model has similar performance under all factor values. 
Thus, good performance remains even if the proportion of data changes, verified Proposition~\ref{prop:subshift}.
In contrast, under domain shift and hybrid shift, the fair source model performs poorly in the target domain where data is sampled from a different sample space, suggesting the difficulty of domain shift.

\textbf{Our method can transfer fairness and accuracy under various types of distribution shifts.}
Under domain shift, the domain adaptation method DANN does not help in transferring fairness or accuracy. 
Consistency regularization forces the model to behave consistently under cropping and padding, resulting in a model that has similar predictions regardless of the object's scale and thus transfers accuracy. 
However, it may cause bias as shown in the results of Laftr+FixMatch. 
With the proposed fair consistency regularization, 
the model gains similar consistency across groups, resulting in a similar accuracy in all groups in the target domain and thus transfers fairness. 
Therefore, our method achieves high accuracy and fairness in two domains under all the considered distribution shifts.

\begin{table}[!htbp]
\captionsetup{font=footnotesize}
\small
\centering
\caption{Transfer fairness and accuracy from UTKFace to FairFace} 
\resizebox{0.8\columnwidth}{!}{
\begin{tabular}{llllllll}
\toprule
               & \multicolumn{3}{c}{Source}            && \multicolumn{3}{c}{Target}            \\ 
               \cline{2-4} \cline{6-8} 
                              & Acc  & \multicolumn{2}{c}{Unfairness} && Acc & \multicolumn{2}{c}{Unfairness}\\
               \cline{3-4} \cline{7-8}
Method         &    & $V_{acc}$ & $\Delta_{odds}$ &  & & $V_{acc}$ & $\Delta_{odds}$ \\
\midrule
Base           & \facc{92.85}{0.49} &  \facc{2.30}{0.97}  & \facc{4.81}{0.69} && \facc{74.49}{0.83} & \facc{5.79}{3.49} & \facc{9.90}{1.27}   \\
Laftr           & \facc{93.24}{0.41} &  \facc{1.19}{0.46}  & \facc{2.44}{0.51} && \facc{74.35}{1.46} & \facc{6.92}{0.72} & \facc{9.79}{1.54}   \\
CFair          & \facc{92.51}{0.22} &  \facc{1.76}{0.53}  & \facc{4.75}{0.85} && \facc{73.53}{0.89} & \facc{7.51}{0.73} & \facc{7.26}{1.95}   \\
\midrule
Laftr+DANN      & \facc{91.33}{0.08} &  \facc{2.12}{1.72}  & \facc{2.70}{0.67} && \facc{74.28}{1.63} & \facc{6.25}{2.59} & \facc{8.27}{2.11}   \\
CFair+DANN      & \facc{90.89}{0.76} &  \facc{2.01}{0.70}  & \facc{4.43}{1.36} && \facc{74.62}{1.06} & \facc{6.23}{0.90} & \facc{5.26}{2.07}   \\
Laftr+FixMatch  & \facc{96.62}{0.06} &  \facc{0.77}{0.21}  & \facc{2.23}{0.44} && \facc{83.87}{0.48} & \facc{8.21}{0.67} & \facc{9.32}{1.01}   \\
CFair+FixMatch  & \facc{96.13}{0.53} &  \facc{1.28}{0.53}  & \facc{2.78}{0.74} && \facc{83.11}{0.49} & \facc{7.87}{1.86} & \facc{7.89}{0.40}   \\
Ours (w/ Laftr)  & \facc{96.08}{0.07} &  \facc{0.96}{0.39}  & \facc{2.59}{0.35} && \facc{85.52}{0.40} & \facc{2.82}{0.87} & \facc{5.70}{0.52}   \\
Ours (w/ CFair)  & \facc{95.65}{0.22} &  \facc{1.56}{0.37}  & \facc{3.85}{0.97} && \facc{84.48}{0.42} & \facc{2.88}{0.99} & \facc{5.43}{0.65}   \\
\bottomrule
\end{tabular}\label{tab:face-vgg_rand}
}
\end{table}

\begin{figure}[h]
\captionsetup{font=footnotesize}
    \centering
    \begin{subfigure}[b]{0.7\textwidth}
        \centering
         \includegraphics[width=\textwidth]{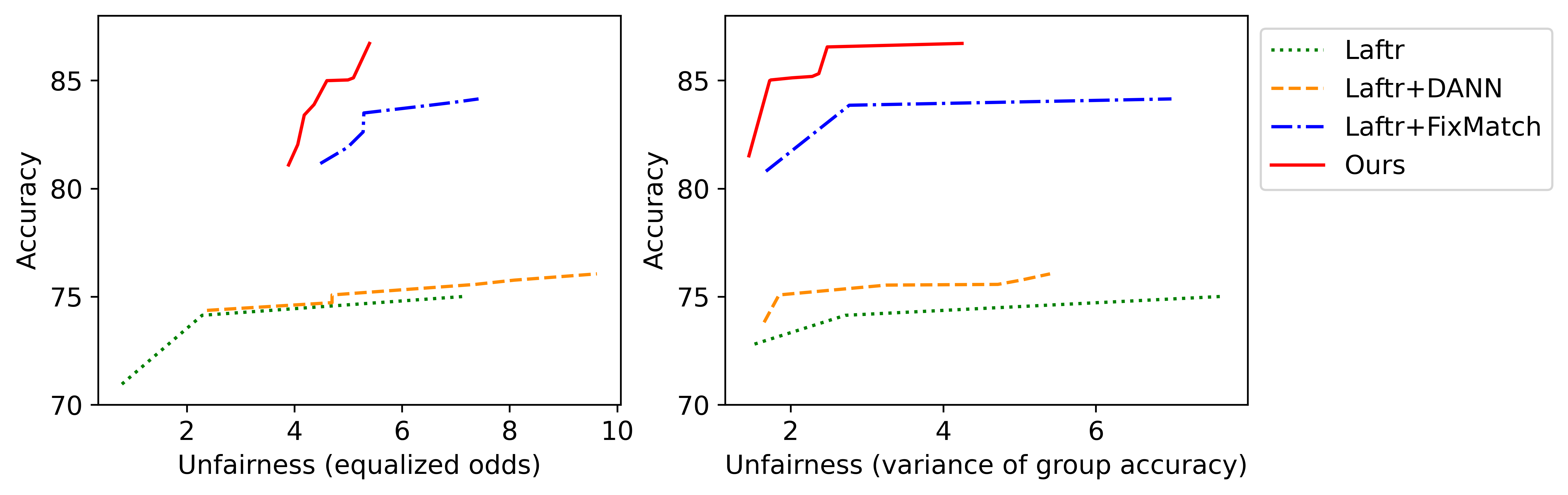}
         \vspace{-0.5em}
         \caption{Pareto frontiers of ours (w/ Laftr) and baselines in the target domain.}
    \end{subfigure}
    \\
    \begin{subfigure}[b]{0.7\textwidth}
        \centering
         \includegraphics[width=\textwidth]{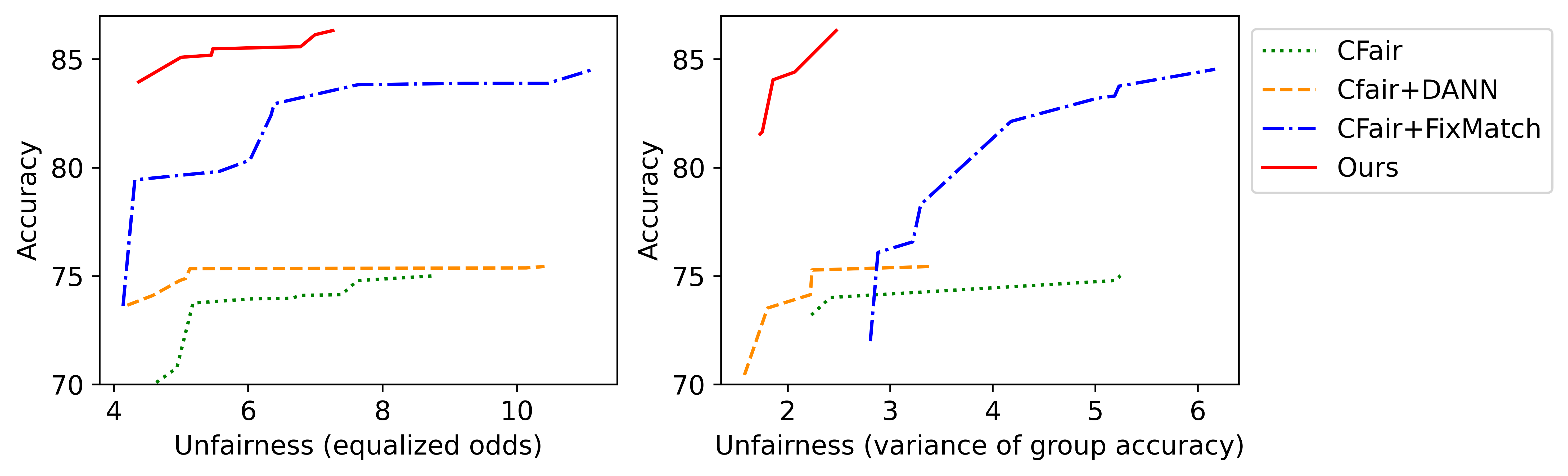}
         \vspace{-0.5em}
         \caption{Pareto frontiers of ours (w/ Cfair) and baselines in the target domain.}
    \end{subfigure}
    \vspace{-0.5em}
    \caption{Comparison of Pareto frontiers. Upper left is preferred. Our method outperforms baseline methods in achieving accuracy and fairness at the same time.}
    \label{fig:pare_ftont}
\end{figure}

\vspace{-0.5em}
\subsection{Evaluation on Real Datasets}
\textbf{Evaluation on images.}
We use UTKFace \cite{zhifei2017cvpr} as the source data and FairFace \cite{karkkainen2019fairface} as the target data. 
Although both are facial images, there is a distribution shift between them due to different image sources. 
We consider a gender classification task with race as the sensitive attribute. 
We use VGG16 \cite{simonyan2014very} as the model and RandAugment \cite{cubuk2020randaugment} (excluding transformations that may change the group) as the transformation function. 
Additional to previous baselines, we also use CFair \cite{conditionalzhao} as the method for in-distribution fairness. 
As shown in Table~\ref{tab:face-vgg_rand}, there is indeed a distribution shift as the source model trained with Laftr or CFair is no longer accurate or fair in the target domain. 
The domain adaptation method has a limited effect on transferring accuracy and fairness. 
As expected, self-training (Laftr+Fixmatch and CFair+Fixmatch) significantly improves the accuracy in the target domain, but the unfairness is high. 
With the proposed fair consistency regularization, our method outperforms it remarkably on fairness with a 70\% decrease in the variance of group accuracy and a 30\% decrease in the equalized odds. 
We further sweep the weights of losses and draw Pareto frontiers. As shown in Figure \ref{fig:pare_ftont}, our method significantly outperforms others in achieving accuracy and fairness at the same time.

\begin{figure}
    \captionsetup{font=footnotesize}
     \centering
     \begin{subfigure}[b]{0.2\textwidth}
         \centering
         \includegraphics[width=\textwidth]{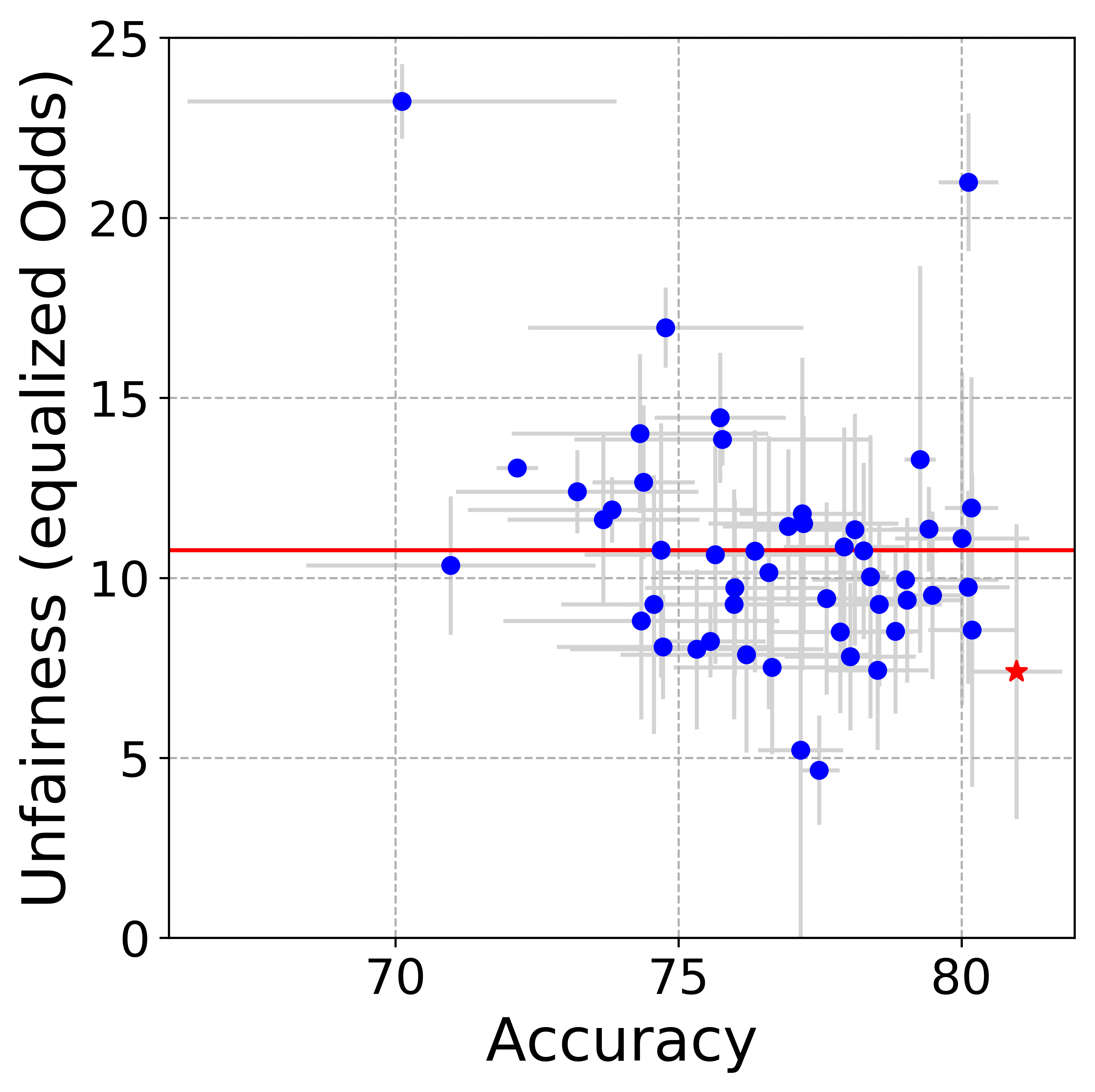}
         \caption{Laftr}\label{fig:adult_laftr}
     \end{subfigure}
     \hfill
     \begin{subfigure}[b]{0.2\textwidth}
         \centering
         \includegraphics[width=\textwidth]{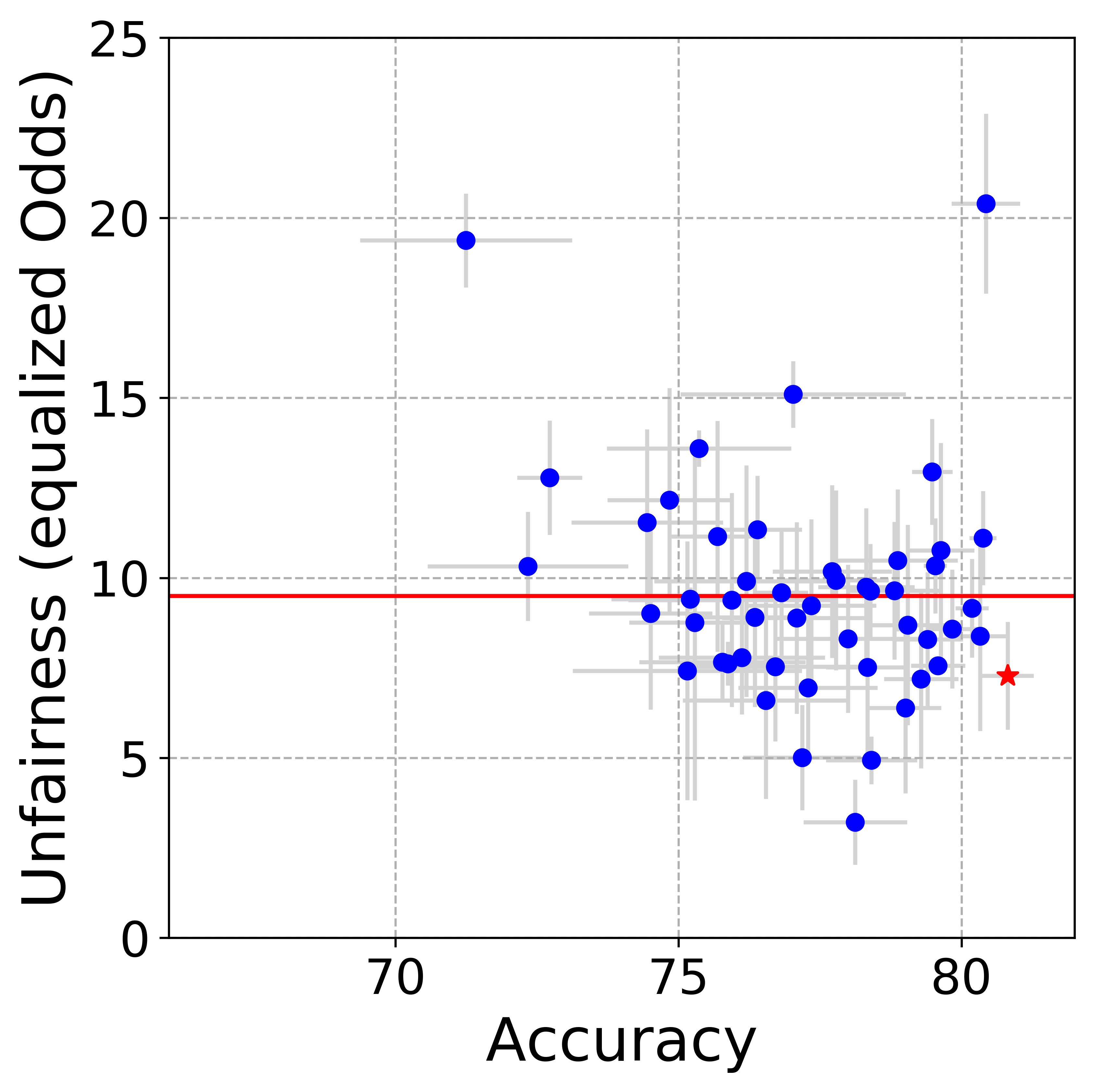}
         \caption{Ours}\label{fig:adult_our}
     \end{subfigure}
     \hfill
     \begin{subfigure}[b]{0.25\textwidth}
         \centering
         \includegraphics[width=\textwidth]{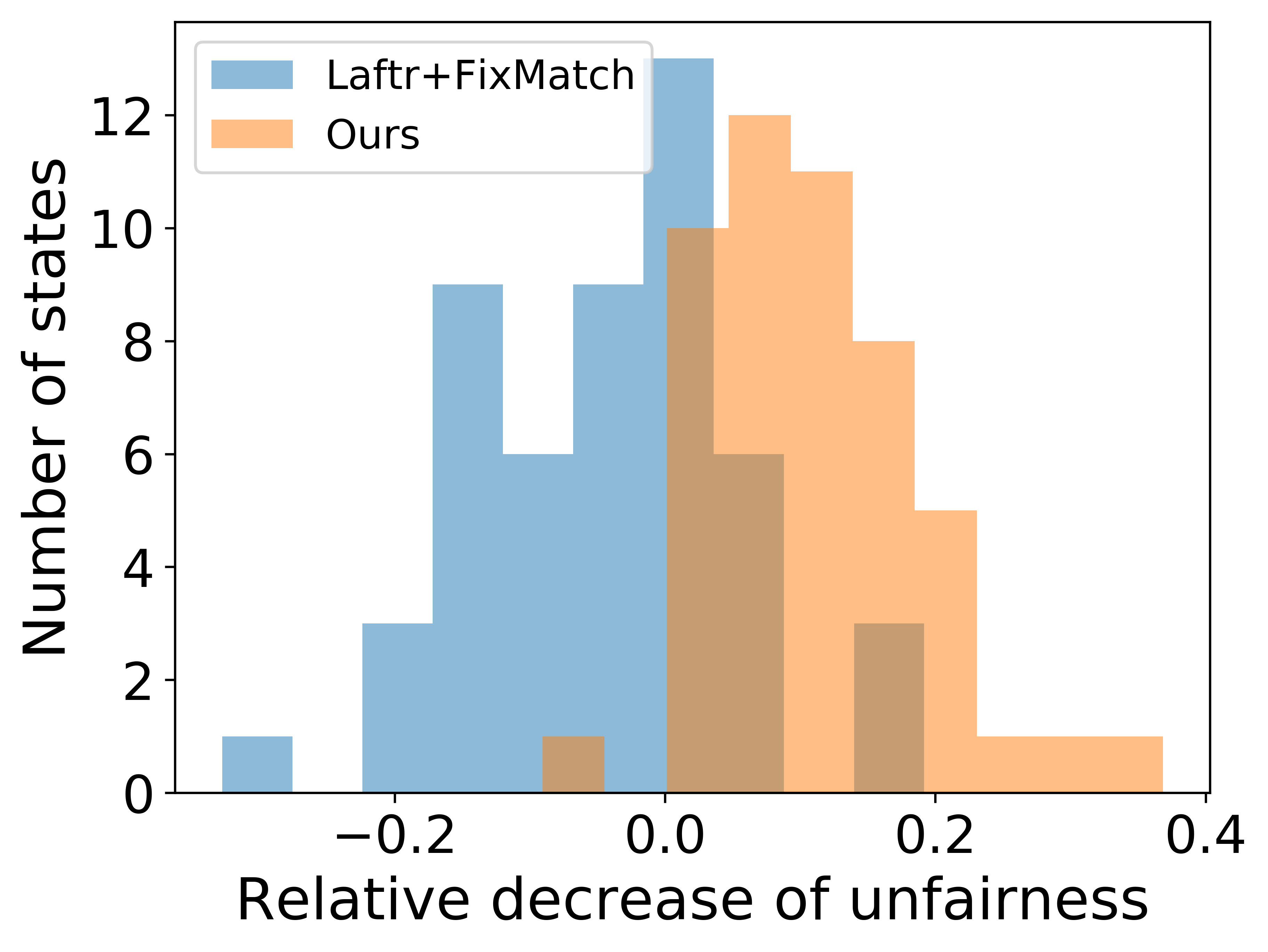}
         \caption{Decrease of unfairness}\label{fig:adult_comp}
     \end{subfigure}
     \caption{Unfairness and accuracy tested on NewAdult. CA as the source domain (red star) and other states as the target domain (blue dots). Red lines indicate the average of unfairness. The relative decrease is calculated by comparing with Laftr.}
     \label{fig:newadult}
\end{figure}

\textbf{Evaluation on tabular data.} 
We further evaluate our method on the NewAdult dataset \cite{ding2021retiring} which contains census data from all states of the United States. We consider gender as the sensitive attribute and do income classification with an MLP as the model. We set CA as the source domain and all the other states as the target domain. We use random perturbation on tabular data (see details in Appendix~\ref{app:exp}) as the transformations. 
Results are shown in Figure~\ref{fig:newadult}. 
When applied to other states, the fair model trained on CA becomes unfair (Figure~\ref{fig:adult_laftr}).
Our method improves the fairness in most states with a slight improvement in accuracy (Figure~\ref{fig:adult_our}). 
Compared with the one without fair consistency regularization, our method achieves better fairness with a decrease in unfairness in most states (Figure~\ref{fig:adult_comp}).

\vspace{-0.5em}
\subsection{Ablation Study}
\vspace{-0.5em}
\textbf{The role of transformation.}
We design transformation functions based on our domain knowledge of latent factors.
To investigate the importance of transformations, we test a weaker set of transformations, which includes only cropping and flipping, on the UTKFace-FairFace experiment and report the performance in Table~\ref{tab:face-vgg_weak}.
Compared with RandAugment in Table~\ref{tab:face-vgg_rand}, consistency under weak transformations leads to a less effective transfer of accuracy since the neighbor generated transformations is much smaller. 
The limited transformations also restrict the performance of our method on tabular data (see Appendix~\ref{app:add-exp}).
Though the ability to transfer accuracy is limited by weak transformations, our method can still make the transfer process fair as there's a significant decrease in unfairness, as shown in Table~\ref{tab:face-vgg_weak}. 

\begin{wrapfigure}{r}{0.4\textwidth}
\captionsetup{font=footnotesize}
\vspace{-1em}
    \centering
     \includegraphics[width=\textwidth]{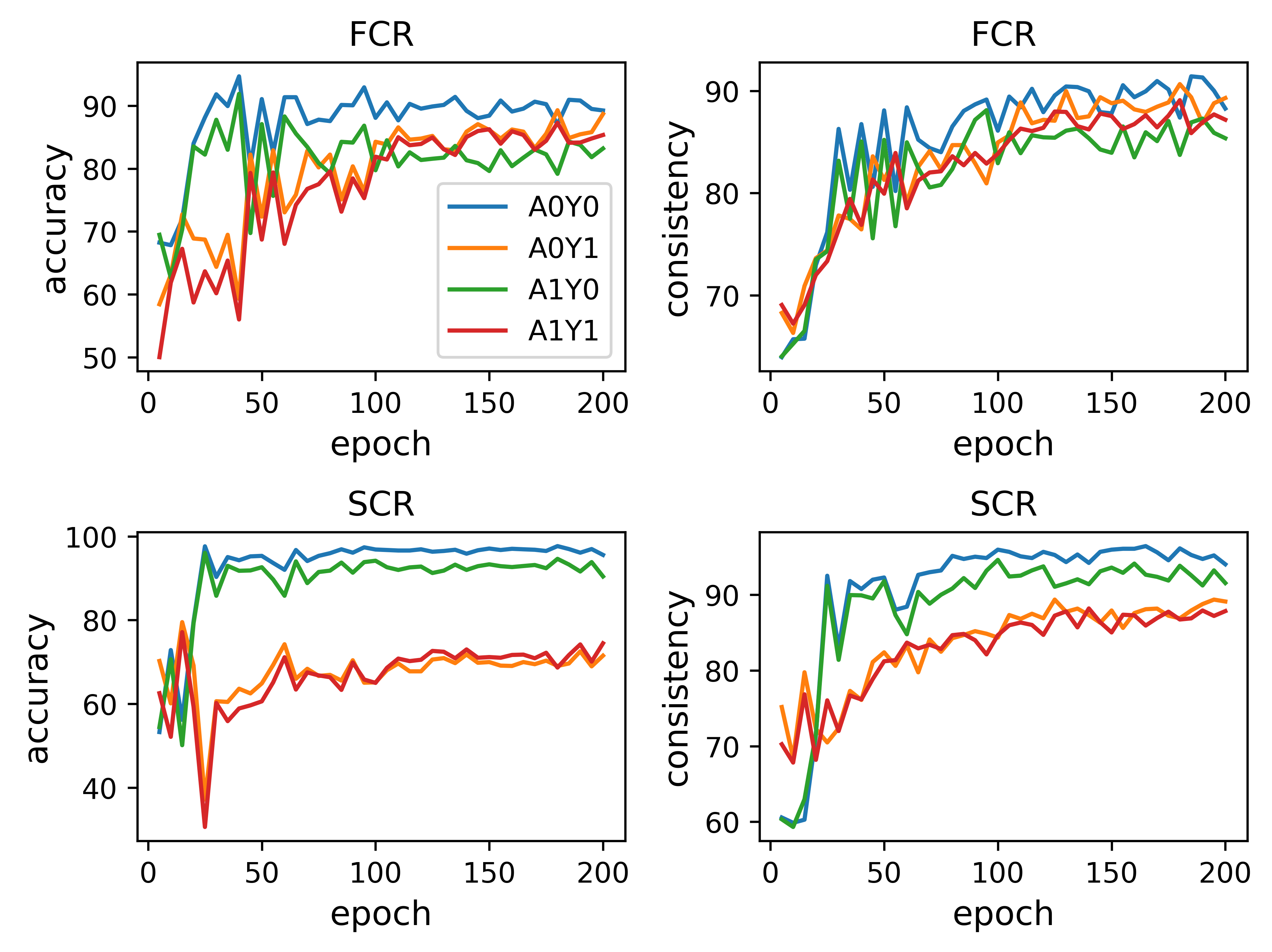}
     \vspace{-2em}
        \caption{Per-group accuracy and consistency. Compared with the standard consistency regularization (SCR), the model trained with fair consistency regularization (FCR) has more balanced consistency and accuracy.}\label{fig:fcr}
\end{wrapfigure}
\textbf{Fair consistency is essential in transferring fairness.} 
To see whether enhanced consistency improves accuracy and whether unbalanced consistency leads to unfairness as suggested by Theorem \ref{th:main}, we evaluate the accuracy and consistency of each group in the UTKFace-FairFace experiment on the target data.
The consistency is measured by testing the model's agreement on the outputs under two random transformations. 
As shown in Figure~\ref{fig:fcr}, groups that obtain higher consistency have higher accuracy, which validates the ability of consistency regularization for transferring accuracy. 
The training methods that use standard consistency regularization (e.g. Laftr+FixMatch) have been observed to be unfair in the target domain. 
Figure~\ref{fig:fcr} shows that it is because the model has imbalanced consistency across groups. 
With our fair consistency regularization, the model gains similar consistency for all groups, resulting in similar group accuracy.

\begin{table}
\captionsetup{font=footnotesize}
\small
\centering
\resizebox{0.8\textwidth}{!}{\begin{tabular}{llllllll}
\toprule
               & \multicolumn{3}{c}{Source}            && \multicolumn{3}{c}{Target}            \\ 
               \cline{2-4} \cline{6-8} 
                              & Acc  & \multicolumn{2}{c}{Unfairness} && Acc & \multicolumn{2}{c}{Unfairness}\\
               \cline{3-4} \cline{7-8}
Method         &    & $V_{acc}$ & $\Delta_{odds}$ &  & & $V_{acc}$ & $\Delta_{odds}$ \\
\midrule
Laftr+FixMatch  & \facc{94.08}{0.70} &  \facc{1.64}{0.46}  & \facc{3.51}{1.46} && \facc{77.05}{0.26} & \facc{12.23}{3.83} & \facc{6.55}{1.54}   \\
CFair+FixMatch  & \facc{94.09}{0.33} &  \facc{0.97}{0.36}  & \facc{2.16}{0.97} && \facc{77.25}{0.21} & \facc{12.93}{2.66} & \facc{9.77}{0.95}   \\

Ours (w/ Laftr)  & \facc{94.25}{0.22} &  \facc{1.06}{0.46}  & \facc{2.09}{0.55} && \facc{77.32}{0.21} & \facc{2.35}{1.67} & \facc{4.27}{1.41}   \\
Ours (w/ CFair)  & \facc{94.24}{0.26} &  \facc{1.67}{0.38}  & \facc{4.43}{0.63} && \facc{77.96}{0.38} & \facc{3.34}{1.08} & \facc{5.70}{1.14}   \\
\bottomrule
\end{tabular}}
\caption{Transfer fairness and accuracy from UTKFace to FairFace with weak transformations}\label{tab:face-vgg_weak}
\end{table}

\begin{wraptable}{r}{0.45\textwidth}
\captionsetup{font=footnotesize}
\small
\centering
\vspace{-1em}
\resizebox{\textwidth}{!}{\begin{tabular}{llll}
\toprule
                      &        Acc & \multicolumn{2}{c}{Unfairness}\\
              \cline{3-4}
Method         &   & $V_{acc}$  & $\Delta_{odds}$ \\
\midrule
Ours  & \facc{85.52}{0.40} & \facc{2.82}{0.87} & \facc{5.70}{0.52}   \\
w/o consistency in target & \facc{82.43}{1.05} &  \facc{6.80}{1.30}  & \facc{5.85}{0.40}  \\
w/o consistency in source & \facc{82.5}{1.58} &  \facc{6.63}{0.71}  & \facc{8.18}{1.27}  \\
w/o dynamic weights & \facc{84.34}{0.19} &  \facc{6.86}{0.50}  & \facc{7.68}{0.81}  \\
w/o updating $g_{tc}$ & \facc{79.13}{0.52} &  \facc{3.49}{0.63}  & \facc{6.65}{1.31}  \\
\bottomrule
\end{tabular}}
\caption{Ablation study on UTKFace-FairFace task} \label{tab:ablation}
\vspace{-0.5em}
\end{wraptable}
\textbf{The role of components in fair consistency regularization.}
Table~\ref{tab:ablation} shows the ablation study. We can see that the consistency in both domains matters. 
Giving every group the same weight instead of using dynamic weights leads to increased unfairness.
Fixing the teacher classifier to be the fair source model, we observe a significant decrease in the accuracy, suggesting the important role of iterative self-training in our algorithm.

\section{Conclusion}
In this paper, we explore how to transfer fairness under distribution shifts. We derive a sufficient condition and present a theory-guided self-training algorithm based on an intra-group expansion assumption. The key component of our algorithm is fair consistency regularization. We simulate different types of distribution shifts with a synthetic dataset and examine our theoretical findings with it. Abundant experiments with synthetic data and real data have shown that our method has superior performance in transferring fairness and accuracy. Like other self-training methods, one limitation of our method is the reliance on a well-defined data transformation set. Future work will relax this limitation for application to more real-world problems.

\section *{Acknowledgements}
This work is supported by National Science Foundation NSF-IIS-FAI program, DOD-ONR-Office of Naval Research, DOD-DARPA-Defense Advanced Research Projects Agency Guaranteeing AI Robustness against Deception (GARD), and Adobe, Capital One and JP Morgan faculty fellowships.

\bibliography{neurips_2022}
\bibliographystyle{plain}

\section*{Checklist}


\begin{enumerate}

\item For all authors...
\begin{enumerate}
  \item Do the main claims made in the abstract and introduction accurately reflect the paper's contributions and scope?
    \answerYes{}
  \item Did you describe the limitations of your work?
    \answerYes{See Section~\ref{app:limit}}
  \item Did you discuss any potential negative societal impacts of your work?
    \answerNA{}
  \item Have you read the ethics review guidelines and ensured that your paper conforms to them?
    \answerYes{}
\end{enumerate}

\item If you are including theoretical results...
\begin{enumerate}
  \item Did you state the full set of assumptions of all theoretical results?
    \answerYes{}
        \item Did you include complete proofs of all theoretical results?
    \answerYes{}
\end{enumerate}

\item If you ran experiments...
\begin{enumerate}
  \item Did you include the code, data, and instructions needed to reproduce the main experimental results (either in the supplemental material or as a URL)?
    \answerYes{}
  \item Did you specify all the training details (e.g., data splits, hyperparameters, how they were chosen)?
    \answerYes{see Section~\ref{app:exp}}
        \item Did you report error bars (e.g., with respect to the random seed after running experiments multiple times)?
    \answerYes{}
        \item Did you include the total amount of compute and the type of resources used (e.g., type of GPUs, internal cluster, or cloud provider)?
    \answerYes{see Section~\ref{app:exp}}
\end{enumerate}

\item If you are using existing assets (e.g., code, data, models) or curating/releasing new assets...
\begin{enumerate}
  \item If your work uses existing assets, did you cite the creators?
    \answerYes{}
  \item Did you mention the license of the assets?
    \answerNA{}
  \item Did you include any new assets either in the supplemental material or as a URL?
    \answerNA{}
  \item Did you discuss whether and how consent was obtained from people whose data you're using/curating?
    \answerNA{}
  \item Did you discuss whether the data you are using/curating contains personally identifiable information or offensive content?
    \answerNA{}
\end{enumerate}

\item If you used crowdsourcing or conducted research with human subjects...
\begin{enumerate}
  \item Did you include the full text of instructions given to participants and screenshots, if applicable?
    \answerNA{}
  \item Did you describe any potential participant risks, with links to Institutional Review Board (IRB) approvals, if applicable?
    \answerNA{}
  \item Did you include the estimated hourly wage paid to participants and the total amount spent on participant compensation?
    \answerNA{}
\end{enumerate}

\end{enumerate}


\newpage
\appendix
{\centering \bf \LARGE
    Supplementary Material
}
\section{More on Related Work}\label{app:sec:related}

\textbf{Fair machine learning.}
Generally, fair machine learning methods fall into three categories:
pre-processing, in-processing, and post-processing  \cite{survey, caton2020fairness}. In this paper, we focus on in-processing methods that modify learning algorithms to remove discrimination during the training process. As for fair classification, several approaches have been proposed including fair representation learning \cite{pmlr-v28-zemel13, fair_vae, beutel2017data, zhang2018mitigating, david_learning_fair, control2019, disentangle2019, conditionalzhao}, fairness-constrained optimization \cite{Donini_erm_fair, reduction}, causal methods \cite{counter2017, causal2019, fairness2020},  and many other approaches with different techniques \cite{fairmeta, fairmixup, goel2021model}. All of those works are for in-distribution fairness, and we investigate out-of-distribution fairness in this paper. We use LAFTR \cite{david_learning_fair}, an adversarial learning method that shows advanced performance on fairness \cite{reddy2021benchmarking}, to learn a fair model in the source domain and adapt it to the target domain. We also test CFair\cite{zhao2019conditional} in our experiments. Many metrics of fairness have been proposed \cite{corbettdavies2018measure} including demographic parity \cite{5360534}, equalized opportunity, and equalized odds \cite{hardt2016equality} which are most widely adopted. In this paper, we use equalized odds to measure unfairness in both domains.

\textbf{Distribution shifts.} 
In many real-world applications, distribution shifts are unavoidable. The goal of existing work addressing distribution shifts is simply to transfer accuracy. \cite{koh2021wilds} propose a benchmark of in-the-wild datasets to study the real distribution shifts. We follow their category of distribution shifts, including subpopulation shifts and domain shifts. Their empirical results on many state-of-the-art methods show that self-training outperforms others on image datasets significantly while having limited performance due to the limited data augmentation on non-image modalities \cite{Sagawa2021ExtendingTW}. This finding aligns with our experimental results. \cite{wiles2022a} conduct a fine-grained analysis of various distribution shifts based on an underlying data generation assumption similar to ours. They also use 3dshapes dataset to simulate different types of distribution shifts. Additional to accuracy, we aim to transfer fairness at the same time in this paper.

\textbf{Domain adaptation and self-training.}
Inspired by the theoretical work \cite{ben2010}, numerous distribution matching approaches have emerged for domain adaption over the past decade. Domain-adversarial training \cite{ganin2016domain} and many of its variants \cite{tzeng2017adversarial, long2018conditional, pmlr-v80-hoffman18a, tachet2020domain} that aim at matching the distribution of two domains in the feature space have shown encouraging results in many applications. However, recent studies \cite{wu2019domain, zhao2019learning, li2020rethinking} show that such methods may fail in many cases since they only optimize part of the theoretical bound. We test DANN \cite{ganin2016domain}, and MMD \cite{long2015learning}, two distribution matching methods in our experiments, and also find them less effective in transferring accuracy and fairness. Recently, another line of work that uses self-training draws increasing attention \cite{zhang2021semisupervised, berthelot2021adamatch}. Those methods enjoy guarantees \cite{colin2021, tianle2021} and demonstrate superior empirical results with desirable properties such as robustness to spurious features \cite{kumar2020understanding, chen2020self, liu2021cycle} and robustness to dataset imbalance \cite{liu2021selfsupervised}. However, all of those work on domain adaptation only aims at transferring accuracy. Although there is work that studies fairness issues in current domain adaptation methods \cite{lan2017discriminatory} and proposes to alleviate it by balancing the data \cite{jing2021towards, wang2021towards, zhu2022the}, fair domain adaptation is still under-explored. Based on the findings that the model's consistency to input transformations is important to generalization \cite{zhu2021understanding} and is a core component of self-training \cite{shu2018dirt, sohn2020fixmatch, grill2020bootstrap}, we improve the consistency regularization in \cite{sohn2020fixmatch} to achieve fair transferring.

\section{Proof and More Discussion of Fairness under Distribution Shifts}\label{app:shift}
\begin{lemma}\label{lemma:con}
Under Assumption~\ref{asum:under}, for a subpopulation shift that is caused by the shift of the marginal distribution of factor $Y^i$, we have
$\Pbb_S(X|Y^i=y^i)=\Pbb_T(X|Y^i=y^i), \forall y^i\in \Ycal^i$.
\end{lemma}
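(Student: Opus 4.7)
The plan is to derive the equality by marginalizing over the remaining latent factors and then invoking Assumption~\ref{asum:under} together with the definition of a subpopulation shift caused by $Y^i$. Write $Y^{-i}$ for the tuple of all latent factors other than $Y^i$, and let $\by^{-i}$ range over $\prod_{j\neq i}\Ycal^j$. The starting point is the law of total probability:
\begin{align*}
\Pbb_S(X=\bx \mid Y^i=y^i) = \sum_{\by^{-i}} \Pbb_S\!\left(X=\bx \mid Y^i=y^i, Y^{-i}=\by^{-i}\right)\, \Pbb_S\!\left(Y^{-i}=\by^{-i} \mid Y^i=y^i\right),
\end{align*}
and the analogous identity in the target domain.

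Next, I would invoke Assumption~\ref{asum:under}, which states that the conditional generative model $\Pbb(X \mid Y^{1:K})$ is the same in both domains. Consequently, the first factor in each summand matches across $S$ and $T$: $\Pbb_S(X=\bx \mid Y^i=y^i, Y^{-i}=\by^{-i}) = \Pbb_T(X=\bx \mid Y^i=y^i, Y^{-i}=\by^{-i})$ for every $\by^{-i}$. Thus proving the lemma reduces to showing that the conditional factor distribution $\Pbb(Y^{-i} \mid Y^i=y^i)$ coincides in the two domains.

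This last step is the main obstacle, and it must be justified from the premise that the subpopulation shift is \emph{caused by} a shift in the marginal of $Y^i$ alone. The cleanest formulation is that, among the latent factors, only the marginal $\Pbb(Y^i)$ changes while the joint distribution of $Y^{-i}$ given $Y^i$ is preserved, i.e.\ $\Pbb_S(Y^{-i} \mid Y^i) = \Pbb_T(Y^{-i} \mid Y^i)$. I would either state this explicitly as part of what ``caused by shift of $Y^i$'' means, or derive it under the standard working hypothesis (used implicitly in Section~\ref{sec:shift}) that the latent factors are mutually independent, so that $\Pbb_S(Y^{-i} \mid Y^i) = \prod_{j\neq i}\Pbb_S(Y^j)$ and by assumption $\Pbb_S(Y^j)=\Pbb_T(Y^j)$ for every $j\neq i$.

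Plugging both facts back into the marginalization identity yields, term by term, $\Pbb_S(X=\bx \mid Y^i=y^i) = \Pbb_T(X=\bx \mid Y^i=y^i)$, which is the claim. The proof is therefore short once the meaning of ``shift of the marginal of $Y^i$'' is pinned down; I expect the writeup to spend most of its effort on making that interpretation explicit, since without it the result can fail whenever $Y^i$ is statistically entangled with other factors whose conditionals also change.
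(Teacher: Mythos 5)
Your proposal matches the paper's proof essentially step for step: marginalize over the remaining factors, invoke the shared generative model from Assumption~\ref{asum:under} to equate the conditionals $\Pbb(X\mid Y^{1:K})$, and then use mutual independence of the latent factors together with the fact that only $\Pbb(Y^i)$ shifts to equate $\Pbb_S(Y^{\{1:K\}\setminus i}\mid Y^i)$ with $\Pbb_T(Y^{\{1:K\}\setminus i}\mid Y^i)$. The paper resolves the step you flag as the ``main obstacle'' exactly as in your second option, by appealing to independence of the factors and the unchanged marginals of the non-shifted ones.
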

\begin{proof}
Under Assumption~\ref{asum:under}, $\Pbb_S(X|Y^{1:K}=\by^{1:K})=\Pbb_T(X|Y^{1:K}=\by^{1:K})$. 
Since the shift only happens on the factor $Y^i$, the marginal distribution of other factors remains the same in the two domains,
$\Pbb_S(Y^{\{1:K\}\setminus i})=\Pbb_T(Y^{\{1:K\}\setminus i})$ where we use $\{1:K\}\setminus i$ to denote $1,..,i-1,i+1,...,K$. Then
\begin{align*}
    \Pbb_S(X|Y^i=y^i)&=\sum_{\by^{\{1:K\}\setminus i}}\Pbb_S(X, Y^{\{1:K\}\setminus i}=\by^{\{1:K\}\setminus i}|Y^i=y^i) \\
    &=\sum_{\by^{\{1:K\}\setminus i}}\Pbb_S(Y^{\{1:K\}\setminus i}=\by^{\{1:K\}\setminus i})\Pbb_S(X|Y^i=y^i,Y^{\{1:K\}\setminus i}=\by^{\{1:K\}\setminus i}) \\
    &=\sum_{\by^{\{1:K\}\setminus i}}\Pbb_T(Y^{\{1:K\}\setminus i}=\by^{\{1:K\}\setminus i})\Pbb_T(X|Y^i=y^i,Y^{\{1:K\}\setminus i}=\by^{\{1:K\}\setminus i}) \\
    &=\Pbb_T(X|Y^i=y^i)
\end{align*}
where the second line holds because of the independence of the latent factors $Y^1, ..., Y^K$.
\end{proof}

Now, we restate Proposition~\ref{prop:subshift} and provide the proof.
\begin{proposition}[Transfer of fairness under subpopulation shift]
Consider the subpopulation shift that is caused by the shifted marginal distribution of a nuisance factor $Y^i$ (i.e., $\Pbb_S(Y^i)\neq \Pbb_T(Y^i)$), while $\Ycal_S^i=\Ycal_T^i=\Ycal^i$. If model $f$ is strictly fair in the source domain under any value of factor $Y^i$ satisfying
$\Pbb_S(f(X)=y^l | Y^a=0, Y^l=y^l, Y^i=y^i)=\Pbb_S(f(X)=y^l | Y^a=1, Y^l=y^l, Y^i=y^i), \forall y^i\in \Ycal^i, y^l\in\{0,1\}$, 
then $f$ is also fair in target domain with $\Delta_{odds}=0$.
\end{proposition}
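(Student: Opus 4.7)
The plan is to show that for each fixed label $y^l$ and each value $a\in\{0,1\}$ of the sensitive attribute, the target-domain true positive/negative rate $\Pbb_T(f(X)=y^l\mid Y^a=a, Y^l=y^l)$ can be written as a weighted average over values $y^i\in\Ycal^i$ of the corresponding \emph{conditional} rate given $Y^i=y^i$. The two key ingredients will be (i) that these conditional rates coincide with the source-domain ones and hence are equal across $a$ by the hypothesis, and (ii) that the mixing weights do not depend on $a$, so the weighted averages agree across $a=0$ and $a=1$, yielding $\Delta_{odds}^T=0$.

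First I would apply the law of total probability to decompose
\begin{align*}
\Pbb_T(f(X)=y^l\mid Y^a=a, Y^l=y^l) = \sum_{y^i\in\Ycal^i}\Pbb_T(f(X)=y^l\mid Y^a=a, Y^l=y^l, Y^i=y^i)\,\Pbb_T(Y^i=y^i\mid Y^a=a, Y^l=y^l).
\end{align*}
Next, I would establish a strengthened version of Lemma~\ref{lemma:con} showing that, under Assumption~\ref{asum:under} together with the independence of the latent factors, the conditional law of $X$ given $(Y^a,Y^l,Y^i)$ is the same in both domains: marginalizing the shared generative model over the remaining nuisance factors $Y^{\{1:K\}\setminus\{a,l,i\}}$ uses only the marginals of these factors, which coincide in source and target. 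This gives
$\Pbb_T(f(X)=y^l\mid Y^a=a, Y^l=y^l, Y^i=y^i)=\Pbb_S(f(X)=y^l\mid Y^a=a, Y^l=y^l, Y^i=y^i)$, which by the hypothesis of strict per-slice source fairness is independent of $a$.

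For the weights, I would use the independence of $Y^i$ from $(Y^a,Y^l)$ in the latent factor model to conclude $\Pbb_T(Y^i=y^i\mid Y^a=a, Y^l=y^l)=\Pbb_T(Y^i=y^i)$, which is clearly independent of $a$. Substituting both facts back into the decomposition above, the expression for $\Pbb_T(f(X)=y^l\mid Y^a=a, Y^l=y^l)$ becomes a sum of $a$-independent terms, so it is equal for $a=0$ and $a=1$. Summing the absolute differences over $y^l\in\{0,1\}$ yields $\Delta_{odds}^T=0$.

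The main obstacle is the rigorous handling of the joint independence structure of the latent factors. Assumption~\ref{asum:under} states the generative model is shared, but the proof needs the stronger (and implicit) assumption that the $Y^k$ are mutually independent, so that conditioning on $(Y^a,Y^l)$ does not alter either the conditional distribution of $X$ given $Y^i$ after marginalizing other factors or the marginal of $Y^i$. I would state this independence explicitly at the start of the proof, then the remaining steps reduce to bookkeeping with total probability.
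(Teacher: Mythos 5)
Your proposal is correct and follows essentially the same route as the paper's proof: decompose the group-conditional rates over $y^i$ via total probability, use the shared generative model plus latent-factor independence to equate the per-slice conditional rates across domains (the paper invokes its Lemma~\ref{lemma:con} for exactly this, conditioning additionally on $Y^a,Y^l$), and use independence again to make the mixing weights $\Pbb_T(Y^i=y^i)$ independent of $a$. Your observation that the mutual independence of the latent factors is used but only implicitly stated in Assumption~\ref{asum:under} is accurate — the paper's proof likewise says ``since all latent factors are independent'' without this appearing in the formal assumption.
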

\begin{proof}
In the target domain, the equalized odds (unfairness) is defined as
\begin{align*}
    \Delta_{odds}=\frac{1}{2}\sum_{y^l=0}^1\big|\Pbb_T(f(X)=y^l | Y^a=0, Y^l=y^l) - \Pbb_T(f(X)=y^l | Y^a=1, Y^l=y^l)\big|.
\end{align*}
Since all latent factors are independent, we have
\begin{align*}
    \Pbb_T\left(f(X)=y^l | Y^a=0, Y^l=y^l\right) &= \sum_{y^i\in \Ycal^i}\Pbb_T(Y^i=y^i)\Pbb_T\left(f(X)=y^l | Y^a=0, Y^l=y^l, Y^i=y^i\right) 
\end{align*}
and 
\begin{align*}
    \Pbb_T\left(f(X)=y^l | Y^a=1, Y^l=y^l\right) &= \sum_{y^i\in \Ycal^i}\Pbb_T(Y^i=y^i)\Pbb_T\left(f(X)=y^l | Y^a=1, Y^l=y^l, Y^i=y^i\right) .
\end{align*}
Therefore, the $\Delta_{odds}$ in the target domain can be decomposed into 
\begin{align*}
    \Delta_{odds}=\frac{1}{2}\sum_{y^l=0}^1\big|\sum_{y^i\in \Ycal^i}\Pbb_T(Y^i=y^i)&\big(\Pbb_T\left(f(X)=y^l | Y^a=0, Y^l=y^l, Y^i=y^i\right) \\
    &-\Pbb_T\left(f(X)=y^l | Y^a=1, Y^l=y^l, Y^i=y^i\right)\big)\big|.
\end{align*}
Since two domains share the same underlying data generative model, and the distribution shift is caused by the shift of the marginal distribution of factor $Y^i$, from Lemma~\ref{lemma:con}, we have
\begin{align*}
   \Pbb_T\left(X | Y^a=0, Y^l=y^l, Y^i=y^i\right)=\Pbb_S\left(X | Y^a=0, Y^l=y^l, Y^i=y^i\right)
\end{align*}
and 
\begin{align*}
   \Pbb_T\left(X | Y^a=1, Y^l=y^l, Y^i=y^i\right)=\Pbb_S\left(X | Y^a=1, Y^l=y^l, Y^i=y^i\right).
\end{align*}
Thus the conditional distribution of the model's prediction also remains, as
\begin{align*}
   \Pbb_T\left(f(X) | Y^a=0, Y^l=y^l, Y^i=y^i\right)=\Pbb_S\left(f(X) | Y^a=0, Y^l=y^l, Y^i=y^i\right)
\end{align*}
and 
\begin{align*}
   \Pbb_T\left(f(X) | Y^a=1, Y^l=y^l, Y^i=y^i\right)=\Pbb_S\left(f(X) | Y^a=1, Y^l=y^l, Y^i=y^i\right).
\end{align*}
In this case, if the source model is strictly fair that $\forall y^i\in \Ycal^i, y^l\in\{0,1\}$ the following holds
\begin{align*}
    \Pbb_S(f(X)=y^l | Y^a=0, Y^l=y^l, Y^i=y^i)=\Pbb_S(f(X)=y^l | Y^a=1, Y^l=y^l, Y^i=y^i), 
\end{align*}
then it is also fair in the target domain with
\begin{align*}
    \Delta_{odds}=\frac{1}{2}\sum_{y^l=0}^1\big|\sum_{y^i\in \Ycal^i}\Pbb_T(Y^i=y^i)&\big(\Pbb_S\left(f(X)=y^l | Y^a=0, Y^l=y^l, Y^i=y^i\right) \\
    &-\Pbb_S\left(f(X)=y^l | Y^a=1, Y^l=y^l, Y^i=y^i\right)\big)\big|=0.
\end{align*}
\end{proof}

This proposition explains why the fair source model is also fair in the target domain under Sshift 1 in our experiments (see Section~\ref{exp:syn}). In addition to shifts of nuisance factors, the subpopulation shifts can also be caused by the marginal distribution shift of the label and sensitive attribute. The following proposition argues that the fair model is also in the target domain under such distribution shifts.

\begin{proposition}[Transfer of fairness under subpopulation shift of sensitive attribute]
Consider the subpopulation shift that is caused by the shifted marginal distribution of sensitive attribute $Y^a$ (i.e., $\Pbb_S(Y^a)\neq \Pbb_T(Y^a)$), while $\Ycal_S^a=\Ycal_T^a=\Ycal^a=\{0,1\}$. If model $f$ is fair in the source domain with $\Delta^S_{odds}=0$, then it is also fair in the target domain with $\Delta^T_{odds}=0$.
\end{proposition}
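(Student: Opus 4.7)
The plan is to reuse, almost verbatim, the machinery developed in the proof of Proposition~3.3: if a subpopulation shift only perturbs the marginal of one factor, then the conditional distribution of $X$ given the remaining factors is unchanged, so the model's conditional behaviour on each $(Y^a, Y^l)$-slice is identical in the two domains; fairness in the target then falls out of fairness in the source.

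Concretely, I would first expand the target-domain equalized odds by its definition,
\begin{equation*}
\Delta^T_{odds} = \tfrac{1}{2}\sum_{y^l=0}^{1} \bigl| \Pbb_T(f(X){=}y^l \mid Y^a{=}0, Y^l{=}y^l) - \Pbb_T(f(X){=}y^l \mid Y^a{=}1, Y^l{=}y^l) \bigr|,
\end{equation*}
and rewrite each conditional probability by marginalizing over the remaining factors $Y^{\{1:K\}\setminus\{a,l\}}$. Using the independence of the latent factors together with the hypothesis that only $\Pbb(Y^a)$ shifts, one has $\Pbb_S(Y^{\{1:K\}\setminus\{a,l\}}) = \Pbb_T(Y^{\{1:K\}\setminus\{a,l\}})$, and by Assumption~3.1 the generator $\Pbb(X \mid Y^{1:K})$ is fixed. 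This is essentially Lemma~B.1 applied with the enlarged conditioning set $\{Y^a, Y^l\}$, and yields
\begin{equation*}
\Pbb_S(X \mid Y^a{=}a, Y^l{=}y^l) = \Pbb_T(X \mid Y^a{=}a, Y^l{=}y^l), \qquad \forall a \in \{0,1\},\, y^l \in \{0,1\}.
\end{equation*}

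Since $f$ is a deterministic measurable map applied pointwise to $X$, the same equality carries over to the pushforward $\Pbb(f(X) \mid Y^a, Y^l)$. Substituting this back into the expansion of $\Delta^T_{odds}$, every summand coincides with the corresponding summand of $\Delta^S_{odds}$, so $\Delta^T_{odds} = \Delta^S_{odds} = 0$ by the source-fairness assumption.

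There is no real obstacle here; the argument is shorter than that of Proposition~3.3 because we do not need to split the sum against a changing marginal as in Equation~(1). The only subtle point worth flagging is that the decomposition relies on independence of the latent factors (the same implicit hypothesis used in Lemma~B.1); if one weakens this to mere factorization of $\Pbb(Y^{\{1:K\}\setminus a} \mid Y^a)$ being invariant across domains, the proof still goes through verbatim, which is a natural way to state the cleanest sufficient condition.
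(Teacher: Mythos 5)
Your proposal is correct and follows essentially the same route as the paper: both expand $\Delta^T_{odds}$ by definition, invoke Lemma~B.1 (extended to condition on both $Y^a$ and $Y^l$, which works because only the marginal of $Y^a$ shifts and the factors are independent) to get $\Pbb_S(X \mid Y^a, Y^l) = \Pbb_T(X \mid Y^a, Y^l)$, push this through $f$, and conclude $\Delta^T_{odds} = \Delta^S_{odds} = 0$. Your remark that no splitting against a changing marginal is needed, unlike in Proposition~3.3, matches the paper's (terser) treatment.
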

\begin{proof}
The proof is similar to the previous one. Since 
\begin{align*}
    \Delta^S_{odds}=\frac{1}{2}\sum_{y^l=0}^1\big|\Pbb_S(f(X)=y^l | Y^a=0, Y^l=y^l) - \Pbb_S(f(X)=y^l | Y^a=1, Y^l=y^l)\big|,
\end{align*}
and from Lemma~\ref{lemma:con} we know that
\begin{align*}
    \Pbb_S(X | Y^a=0, Y^l=y^l)&=\Pbb_T(X | Y^a=0, Y^l=y^l) \\
    \Pbb_S(X | Y^a=1, Y^l=y^l)&=\Pbb_T(X | Y^a=1, Y^l=y^l), 
\end{align*}
thus,
\begin{align*}
    \Delta^T_{odds}&=\frac{1}{2}\sum_{y^l=0}^1\big|\Pbb_T(f(X)=y^l | Y^a=0, Y^l=y^l) - \Pbb_T(f(X)=y^l | Y^a=1, Y^l=y^l)\big| \\
    &=\frac{1}{2}\sum_{y^l=0}^1\big|\Pbb_S(f(X)=y^l | Y^a=0, Y^l=y^l) - \Pbb_S(f(X)=y^l | Y^a=1, Y^l=y^l)\big|\\
    &=\Delta^S_{odds}=0 .
\end{align*}
\end{proof}
Such a result also holds for subpopulation shifts caused by the shift of label $Y^l$. This proposition explains why the fair source model is also fair in the target domain under Sshift 2 in our experiments (see Section~\ref{exp:syn}). It suggests that encouraging fairness is able to alleviate spurious correlation. We leave more studies on this interesting finding to future work.

\textbf{Remark.} All the above analyses are based on the population distribution where ${\Pbb_S(f(X)=y^l|Y^a=0, Y^l=y^l)=\Ebb_{\Pbb_S(X,Y^{1:K})}(f(X)=y^l|Y^a=0, Y^l=y^l)}$. In practice, it is estimated by finite samples. Insufficient samples would cause estimation errors in fairness and bring another challenge for transferring fairness. In this paper, we only consider the fairness measured by population distribution. Future work will investigate the impact of estimation error on transferring fairness and the way to resolve it.

\section{Proof of the Sufficient Condition for Transferring Fairness} \label{app:proof}
Our proof is based on the theory in \cite{colin2021}.
\begin{theorem}
(Restatement of Lemma A.8 in \cite{colin2021}) We assume that $U_a^y$ satisfies $(\bar{\alpha}, \bar{c})$-multiplicative expansion for $\varepsilon_{U_a^y}(g_{tc})\leq\bar{\alpha}<1/3$ and $\bar{c}>3$. We define $c\triangleq\min\{1/\bar{\alpha}, \bar{c}\}$. Then for any classifier $g: \Xcal \rightarrow \Ycal$, the error of it on the group $U_a^y$ is upper bounded as:
\begin{align*}
    \varepsilon_{U_a^y}(g) \leq \frac{c+1}{c-1} L_{U_a^y}(g, g_{tc}) + \frac{2c}{c-1}R_{U_a^y}(g)- \varepsilon_{U_a^y}(g_{tc}) \label{eq:upper}
\end{align*}
\end{theorem}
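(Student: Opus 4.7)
The plan is to adapt the expansion-based argument of Wei--Shen--Cheng--Ma (the cited source) to the per-group setting, essentially relativizing their Lemma A.8 to a single one of the four groups. Because the intra-group expansion assumption is a direct per-group analogue of their class-conditional expansion, the structure of the argument transfers almost verbatim once one restricts to the conditional distribution $U_a^y$ and uses the fact that $g^*$ is constant on that group by the separability assumption.

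\textbf{Step 1: Partition the error support.} I would introduce $E_g = \{x \in U_a^y : g(x) \neq g^*(x)\}$ and $E_{tc} = \{x \in U_a^y : g_{tc}(x) \neq g^*(x)\}$, so that $\varepsilon_{U_a^y}(g) = \Pbb_{U_a^y}(E_g)$ and $\varepsilon_{U_a^y}(g_{tc}) = \Pbb_{U_a^y}(E_{tc})$. Decompose
\begin{align*}
    E_g = (E_g \cap E_{tc}) \cup (E_g \setminus E_{tc}),
\end{align*}
and observe that on $E_g \setminus E_{tc}$ one has $g(x) \neq g^*(x) = g_{tc}(x)$, so $\Pbb_{U_a^y}(E_g \setminus E_{tc}) \leq L_{U_a^y}(g, g_{tc})$ (this step uses the robustified definition of $L$, absorbing the $x'$-slack harmlessly). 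Thus the entire difficulty is to control $\Pbb_{U_a^y}(E_g \cap E_{tc})$, the "doubly wrong" set.

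\textbf{Step 2: Apply intra-group expansion to the doubly wrong set.} Since $E_g \cap E_{tc} \subset E_{tc}$ has measure at most $\bar{\alpha}$, the $(\bar{\alpha}, \bar{c})$-multiplicative expansion applies and gives $\Pbb_{U_a^y}(\Ncal(E_g \cap E_{tc})) \geq c\,\Pbb_{U_a^y}(E_g \cap E_{tc})$, so
\begin{align*}
    \Pbb_{U_a^y}\bigl(\Ncal(E_g \cap E_{tc}) \setminus (E_g \cap E_{tc})\bigr) \geq (c-1)\,\Pbb_{U_a^y}(E_g \cap E_{tc}).
\end{align*}
Every point in that leftover neighborhood has a neighbor on which $g$ outputs the wrong label; classifying such a point by its own status under $g$ and $g_{tc}$ shows it must witness either (i) inconsistency of $g$ across its own neighborhood, contributing to $R_{U_a^y}(g)$, or (ii) disagreement between $g$ and $g_{tc}$, contributing to the support of $L_{U_a^y}(g, g_{tc})$. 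Summing the two possibilities yields a bound of the shape
\begin{align*}
    (c-1)\,\Pbb_{U_a^y}(E_g \cap E_{tc}) \leq 2\,R_{U_a^y}(g) + \bigl(L_{U_a^y}(g, g_{tc}) - \Pbb_{U_a^y}(E_g \setminus E_{tc})\bigr),
\end{align*}
where the subtraction comes from not double-counting the portion of $L$ already charged in Step~1.

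\textbf{Step 3: Invert and extract constants.} Adding the Step 1 bound $\Pbb_{U_a^y}(E_g) \leq \Pbb_{U_a^y}(E_g \cap E_{tc}) + L_{U_a^y}(g, g_{tc})$ to the Step 2 inequality (divided by $c-1$) and using the identity $\Pbb_{U_a^y}(E_g \cap E_{tc}) = \varepsilon_{U_a^y}(g_{tc}) - \Pbb_{U_a^y}(E_{tc} \setminus E_g)$ to surface the $-\varepsilon_{U_a^y}(g_{tc})$ term, the linear algebra collapses to the advertised coefficients $\frac{c+1}{c-1}$ in front of $L_{U_a^y}(g, g_{tc})$ and $\frac{2c}{c-1}$ in front of $R_{U_a^y}(g)$. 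The hypotheses $\bar{\alpha} < 1/3$ and $\bar{c} > 3$ guarantee $c > 3$, so $c - 1 > 2$ and both coefficients are positive and finite; they also ensure that the expansion step is actually applicable (i.e.\ $E_g \cap E_{tc}$ really is small enough to trigger multiplicative expansion rather than saturating at~$1$).

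\textbf{Main obstacle.} The delicate part is the neighborhood accounting in Step 2: because $R$ and $L$ are defined via transformed inputs $x'$ rather than raw proximity, one must verify that every point of $\Ncal(E_g \cap E_{tc}) \setminus (E_g \cap E_{tc})$ is genuinely charged exactly once --- either to the consistency loss or to the disagreement loss --- without double-counting the overlap region where $g$ happens to agree with $g_{tc}$ but both differ from $g^*$. A secondary subtlety is bookkeeping the $-\varepsilon_{U_a^y}(g_{tc})$ correction so that it emerges naturally rather than being added by hand; this relies on noticing that points where the teacher errs but the student is correct ($E_{tc} \setminus E_g$) also lie in the disagreement support, so they cancel against a piece of $L_{U_a^y}(g, g_{tc})$. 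Finally, one should check that restricting to a single group $U_a^y$, rather than to a whole class, does not break the argument --- it does not, precisely because the intra-group expansion assumption was designed to make the entire Wei--Ma argument go through on each group separately under the separability of Assumption~\ref{asum:data}.
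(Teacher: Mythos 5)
First, note that the paper does not actually prove this statement: it is imported verbatim (per group) from Lemma~A.8 of \cite{colin2021}, and the only implicit content of the paper's ``proof'' is the observation that the intra-group expansion and separability assumptions let that lemma be run on each conditional distribution $U_a^y$ separately. Your overall architecture --- partition the error set by whether the teacher is also wrong, apply expansion to the doubly-wrong set, and charge its expanded neighborhood to $R_{U_a^y}(g)$ and $L_{U_a^y}(g,g_{tc})$ --- is indeed the architecture of that source proof, and your per-group relativization remark is correct. However, the bookkeeping in Steps 2--3 does not close, and the inequalities you write down do not combine to the stated bound. Writing $A=E_g\cap E_{tc}$, $B=E_g\setminus E_{tc}$, $C=E_{tc}\setminus E_g$, your Step 3 uses only $\varepsilon_{U_a^y}(g)\leq \Pbb(A)+L$, which can never produce the $-\varepsilon_{U_a^y}(g_{tc})$ term; the correct route is $\Pbb(B)\leq L-\Pbb(C)=L-\varepsilon_{U_a^y}(g_{tc})+\Pbb(A)$, hence $\varepsilon_{U_a^y}(g)\leq 2\Pbb(A)+L-\varepsilon_{U_a^y}(g_{tc})$, with \emph{two} copies of $\Pbb(A)$. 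Moreover your Step 2 inequality $(c-1)\Pbb(A)\leq 2R+(L-\Pbb(B))$ has the wrong sign on $\Pbb(B)$ and an unjustified factor $2$ on $R$; taken at face value together with your Step 1 it yields $\varepsilon_{U_a^y}(g)\leq L+\tfrac{2}{c-1}R$, a strictly different (and unobtainable) bound with no teacher-error term at all.

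The correct accounting is: let $V=A\cap\{g\text{ robust}\}$, so $\Pbb(V)\leq\varepsilon_{U_a^y}(g_{tc})\leq\bar\alpha$ and expansion applies with $c\Pbb(V)\leq 1$. For $z\in\Ncal(V)\setminus V$ with witness $x\in V$, if $g$ is robust at $z$ then $g(z)=g(x)\neq g^*(z)$ (using that $g^*$ is constant on the group), and robustness plus $z\notin V$ forces $z\in B$; hence $\Ncal(V)\setminus V\subseteq B\cup\{g\text{ not robust}\}$ and
\begin{align*}
(c-1)\,\Pbb(V)\;\leq\;\Pbb(B)+R_{U_a^y}(g),\qquad\text{so}\qquad \Pbb(A)\;\leq\;\Pbb(V)+R_{U_a^y}(g)\;\leq\;\frac{\Pbb(B)+c\,R_{U_a^y}(g)}{c-1}.
\end{align*}
Substituting into $\varepsilon_{U_a^y}(g)\leq 2\Pbb(A)+L-\varepsilon_{U_a^y}(g_{tc})$ and using $\Pbb(B)\leq L$ gives exactly $\tfrac{c+1}{c-1}L+\tfrac{2c}{c-1}R_{U_a^y}(g)-\varepsilon_{U_a^y}(g_{tc})$. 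Note also that you must apply expansion to the robust part $V$ of the doubly-wrong set, not to $E_g\cap E_{tc}$ itself: a non-robust point of $A$ has neighbors that cannot be charged to anything, and your factor of $2$ on $R$ does not repair this. So the strategy is right but the proof as written is not a proof of the stated inequality.
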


\begin{theorem}\label{thm:upper}
(A restricted version of the above theorem) We assume that $U_a^y$ satisfies $(\bar{\alpha}, \bar{c})$-multiplicative expansion for $\varepsilon_{U_a^y}(g_{tc})\leq\bar{\alpha}<1/3$ and $\bar{c}>3$. We define $c\triangleq\min\{1/\bar{\alpha}, \bar{c}\}$. Then for any classifier $g: \Xcal \rightarrow \Ycal$ satisfies $L_{U_a^y}(g, g_{tc})\leq \varepsilon_{U_a^y}(g_{tc})$, the error of it on the group $U_a^y$ is upper bounded as:
\begin{align*}
    \varepsilon_{U_a^y}(g) \leq \frac{2}{c-1} \varepsilon_{U_a^y}(g_{tc}) + \frac{2c}{c-1}R_{U_a^y}(g) 
\end{align*}
\end{theorem}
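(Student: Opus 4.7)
\textbf{Proof proposal for Theorem~\ref{thm:upper}.} The plan is to treat this statement as an immediate corollary of the preceding restatement of Lemma~A.8 in \cite{colin2021}, since all of the geometric/expansion work has already been packaged into that lemma. First I would invoke the restatement theorem with the same $g$, $g_{tc}$, and $U_a^y$ (whose hypotheses on $\bar\alpha$, $\bar c$, and $\varepsilon_{U_a^y}(g_{tc})$ are exactly the ones assumed here), obtaining the unconditional bound
\begin{equation*}
\varepsilon_{U_a^y}(g) \;\leq\; \frac{c+1}{c-1}\, L_{U_a^y}(g, g_{tc}) \;+\; \frac{2c}{c-1}\, R_{U_a^y}(g) \;-\; \varepsilon_{U_a^y}(g_{tc}).
\end{equation*}

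Next I would plug in the extra restriction $L_{U_a^y}(g, g_{tc}) \leq \varepsilon_{U_a^y}(g_{tc})$. Because the coefficient $\frac{c+1}{c-1}$ is positive (we have $c>3$ by definition $c=\min\{1/\bar\alpha,\bar c\}$ with $\bar\alpha<1/3$ and $\bar c>3$, so in particular $c>1$), this substitution preserves the inequality direction and gives
\begin{equation*}
\varepsilon_{U_a^y}(g) \;\leq\; \frac{c+1}{c-1}\,\varepsilon_{U_a^y}(g_{tc}) \;+\; \frac{2c}{c-1}\, R_{U_a^y}(g) \;-\; \varepsilon_{U_a^y}(g_{tc}).
\end{equation*}
Finally I would collect the $\varepsilon_{U_a^y}(g_{tc})$ terms, using $\frac{c+1}{c-1}-1=\frac{2}{c-1}$, which immediately yields the claimed inequality
\begin{equation*}
\varepsilon_{U_a^y}(g) \;\leq\; \frac{2}{c-1}\,\varepsilon_{U_a^y}(g_{tc}) \;+\; \frac{2c}{c-1}\, R_{U_a^y}(g).
\end{equation*}

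There is essentially no obstacle here beyond verifying that the hypotheses of the parent lemma are satisfied and that $c>1$ so no sign flip occurs; all of the genuine difficulty (the expansion-based label-propagation argument) lives inside the restatement theorem that we are quoting. In the write-up I would only need to be explicit about the positivity of $c-1$ (and hence of the coefficient $(c+1)/(c-1)$) so the substitution step is clearly monotone, and then the remainder is one line of arithmetic.
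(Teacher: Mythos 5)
Your proposal is correct and follows exactly the same route as the paper's own proof: invoke the restated Lemma~A.8 bound, substitute $L_{U_a^y}(g, g_{tc})\leq \varepsilon_{U_a^y}(g_{tc})$, and combine the $\varepsilon_{U_a^y}(g_{tc})$ terms via $\frac{c+1}{c-1}-1=\frac{2}{c-1}$. The only difference is that you explicitly record the positivity of $c-1$ justifying the monotone substitution, which the paper leaves implicit.
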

\begin{proof}
\begin{align*}
    \varepsilon_{U_a^y}(g) &\leq \frac{c+1}{c-1} L_{U_a^y}(g, g_{tc}) + \frac{2c}{c-1}R_{U_a^y}(g)- \varepsilon_{U_a^y}(g_{tc}) \\
    \varepsilon_{U_a^y}(g) &\leq \frac{2}{c-1} \varepsilon_{U_a^y}(g_{tc}) + \frac{2c}{c-1}R_{U_a^y}(g) \tag*{(because $L_{U_a^y}(g, g_{tc})\leq \varepsilon_{U_a^y}(g_{tc})$)}
\end{align*}
\end{proof}

\begin{theorem} \label{thm:lower}
If $L_{U_a^y}(g, g_{tc})\leq \varepsilon_{U_a^y}(g_{tc})$, we have
\begin{align*}
    \varepsilon_{U_a^y}(g) \geq \varepsilon_{U_a^y}(g_{tc}) - L_{U_a^y}(g, g_{tc}) 
\end{align*}
\end{theorem}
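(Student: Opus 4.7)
The plan is to prove this lower bound by a direct triangle inequality argument on the 0--1 disagreement pseudometric. Observe that all three quantities involved, namely $\varepsilon_{U_a^y}(g)=\Pbb_{U_a^y}[g(\bx)\neq g^*(\bx)]$, $\varepsilon_{U_a^y}(g_{tc})=\Pbb_{U_a^y}[g_{tc}(\bx)\neq g^*(\bx)]$, and $L_{U_a^y}(g,g_{tc})=\Pbb_{U_a^y}[g(\bx)\neq g_{tc}(\bx)]$, are just probabilities (under $U_a^y$) of disagreement events between pairs of classifiers.

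The key observation is that whenever $g_{tc}(\bx)\neq g^{*}(\bx)$, at least one of the two events $\{g_{tc}(\bx)\neq g(\bx)\}$ or $\{g(\bx)\neq g^{*}(\bx)\}$ must hold, since otherwise we would have $g_{tc}(\bx)=g(\bx)=g^{*}(\bx)$. Applying a union bound over this pointwise inclusion gives
\begin{align*}
\varepsilon_{U_a^y}(g_{tc}) \;\le\; L_{U_a^y}(g,g_{tc}) + \varepsilon_{U_a^y}(g),
\end{align*}
which rearranges immediately to the claimed inequality $\varepsilon_{U_a^y}(g)\ge \varepsilon_{U_a^y}(g_{tc})-L_{U_a^y}(g,g_{tc})$. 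The hypothesis $L_{U_a^y}(g,g_{tc})\le \varepsilon_{U_a^y}(g_{tc})$ is not actually needed for the inequality itself; its role is only to ensure the right-hand side is non-negative, so that the bound is informative (and can be meaningfully paired with the matching upper bound in Theorem~\ref{thm:upper} to control fairness gaps group-by-group).

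There is no real obstacle here: the argument is a one-line triangle inequality, and the only thing to be slightly careful about is phrasing the pointwise inclusion of events correctly before integrating under $U_a^y$. If one wanted to make the writing maximally transparent, one could introduce the indicator functions $\mathbf{1}[g\neq g^{*}],\ \mathbf{1}[g\neq g_{tc}],\ \mathbf{1}[g_{tc}\neq g^{*}]$ and note the pointwise bound $\mathbf{1}[g_{tc}\neq g^{*}]\le \mathbf{1}[g\neq g_{tc}]+\mathbf{1}[g\neq g^{*}]$ (valid because the event on the left forces at least one of the events on the right), then take expectations under $U_a^y$ to recover the inequality.
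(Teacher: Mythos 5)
Your proof is correct and is exactly the argument the paper intends: its entire proof of this statement is the phrase ``By triangle inequality,'' and your pointwise union-bound derivation of $\varepsilon_{U_a^y}(g_{tc}) \le L_{U_a^y}(g,g_{tc}) + \varepsilon_{U_a^y}(g)$ is just that triangle inequality spelled out. Your observation that the hypothesis $L_{U_a^y}(g,g_{tc})\le \varepsilon_{U_a^y}(g_{tc})$ is only needed to make the bound non-vacuous, not to prove it, is also accurate.
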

\begin{proof}
By triangle inequality. 
\end{proof}
Now, we restate Theorem~\ref{th:main} and provide the proof.
\begin{theorem} 
Suppose we have a teacher classifier $g_{tc}$ with bounded unfairness such that $|\varepsilon_{U_a^y}(g_{tc})-\varepsilon_{U_{a'}^{y'}}(g_{tc})|\leq \gamma, \forall a,a'\in \Acal$ and $y,y'\in \Ycal$.
We assume intra-group expansion where $U_a^y$ satisfies $(\bar{\alpha}, \bar{c})$-multiplicative expansion and $\varepsilon_{U_a^y}(g_{tc})\leq\bar{\alpha}<1/3$ and $\bar{c}>3, \forall a,y$. We define $c\triangleq\min\{1/\bar{\alpha}, \bar{c}\}$, and set $\mu \leq \varepsilon_{U_a^y}(g_{tc}), \forall a,y$.
If we train our classifier with the algorithm
\begin{align}
    &\min_{g\in G}\max_{a, y} R_{U_a^y}(g) \\
    \textnormal{s.t. } &\quad L_{U_a^y}(g, g_{tc})\leq \mu \quad \forall a,y \notag
\end{align}
then the error and unfairness of the optimal solution $\hat{g}$ on the distribution $U$ are bounded with
\begin{align*}
    \varepsilon(\hat{g}) &\leq \frac{2}{c-1} \varepsilon_U(g_{tc}) + \frac{2c}{c-1}R_U(\hat{g}), \\
    \Delta_{odds}(\hat{g}) &\leq \frac{2}{c-1}(\gamma + \mu + c\max_{a, y} R_{U_a^y}(\hat{g})).
\end{align*}
\end{theorem}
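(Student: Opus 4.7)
The plan is to apply Theorem~\ref{thm:upper} group-by-group and then chain it through the teacher's fairness assumption and the triangle-inequality lower bound of Theorem~\ref{thm:lower}. The constraint $L_{U_a^y}(\hat g, g_{tc}) \leq \mu$ together with the hypothesis $\mu \leq \varepsilon_{U_a^y}(g_{tc})$ is exactly what is needed to invoke Theorem~\ref{thm:upper} on each of the four groups, yielding the uniform per-group bound
\begin{equation*}
\varepsilon_{U_a^y}(\hat g) \leq \frac{2}{c-1}\varepsilon_{U_a^y}(g_{tc}) + \frac{2c}{c-1}R_{U_a^y}(\hat g), \qquad \forall\, a,y.
\end{equation*}

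For the error bound on $U$, I would use that the population distribution decomposes as a convex combination $U = \sum_{a,y} p_a^y U_a^y$ of the group distributions. By linearity, $\varepsilon_U(\hat g)$, $\varepsilon_U(g_{tc})$, and $R_U(\hat g)$ are the same convex combinations of their per-group counterparts, so averaging the per-group bound with these weights immediately yields the claimed $\varepsilon(\hat g) \leq \tfrac{2}{c-1}\varepsilon_U(g_{tc}) + \tfrac{2c}{c-1}R_U(\hat g)$.

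For the fairness bound I would first rewrite $\Delta_{odds}(\hat g) = \tfrac{1}{2}\sum_{y} |\varepsilon_{U_0^y}(\hat g) - \varepsilon_{U_1^y}(\hat g)|$ using $\Pbb(\hat Y = y \mid A=a, Y=y) = 1 - \varepsilon_{U_a^y}(\hat g)$, so the task reduces to controlling the group-error gap $\varepsilon_{U_a^y}(\hat g) - \varepsilon_{U_{a'}^{y'}}(\hat g)$ for arbitrary groups. Chaining two substitutions into the per-group upper bound, namely the teacher-fairness assumption $\varepsilon_{U_a^y}(g_{tc}) \leq \varepsilon_{U_{a'}^{y'}}(g_{tc}) + \gamma$, and then the triangle inequality $\varepsilon_{U_{a'}^{y'}}(g_{tc}) \leq \varepsilon_{U_{a'}^{y'}}(\hat g) + L_{U_{a'}^{y'}}(\hat g, g_{tc}) \leq \varepsilon_{U_{a'}^{y'}}(\hat g) + \mu$ (the point at which the constraint in the optimization program enters), I obtain
\begin{equation*}
\varepsilon_{U_a^y}(\hat g) \leq \frac{2}{c-1}\varepsilon_{U_{a'}^{y'}}(\hat g) + \frac{2}{c-1}(\gamma + \mu) + \frac{2c}{c-1}R_{U_a^y}(\hat g).
\end{equation*}

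The main obstacle is then purely algebraic: subtracting $\varepsilon_{U_{a'}^{y'}}(\hat g)$ from both sides leaves a residual $\tfrac{3-c}{c-1}\,\varepsilon_{U_{a'}^{y'}}(\hat g)$ on the right, and this residual is non-positive precisely because the hypothesis $c > 3$ forces $\tfrac{2}{c-1} - 1 < 0$; dropping it preserves the inequality. After bounding $R_{U_a^y}(\hat g) \leq \max_{a,y} R_{U_a^y}(\hat g)$ and symmetrizing the roles of $(a,y)$ and $(a',y')$, one arrives at $|\varepsilon_{U_a^y}(\hat g) - \varepsilon_{U_{a'}^{y'}}(\hat g)| \leq \tfrac{2}{c-1}\bigl(\gamma + \mu + c \max_{a,y} R_{U_a^y}(\hat g)\bigr)$, and plugging this into the rewritten expression for $\Delta_{odds}$ closes the proof. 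The subtle point is noticing that $c > 3$ is exactly the condition that lets the student-error term be absorbed; without it the chain of substitutions would fail to close, leaving an unwanted $\varepsilon_{U_{a'}^{y'}}(\hat g)$ on the right-hand side.
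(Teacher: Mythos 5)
Your proof is correct and follows essentially the same route as the paper's: apply the restricted per-group error bound (enabled by $L_{U_a^y}(\hat g, g_{tc})\leq\mu\leq\varepsilon_{U_a^y}(g_{tc})$), average over groups for the population error, and for fairness combine the teacher's bounded unfairness with the triangle inequality between $\varepsilon_{U_{a'}^{y'}}(\hat g)$ and $\varepsilon_{U_{a'}^{y'}}(g_{tc})$, using $c>3$ to kill the leftover term. The only (immaterial) difference is that you drop the non-positive residual $\tfrac{3-c}{c-1}\varepsilon_{U_{a'}^{y'}}(\hat g)$ directly, whereas the paper absorbs the analogous residual in the teacher's error into the $L$ term via $L\leq\varepsilon(g_{tc})$; both land on the same constant.
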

\begin{proof}
The upper bound of error is derived from Theorem~\ref{thm:upper}. For the unfairness, by definition 
\begin{align*}
    \Delta_{odds}(\hat{g})=\frac{1}{2}\left(\left|\varepsilon_{U_0^0}(\hat{g}) - \varepsilon_{U_1^0}(\hat{g})\right|+
    \left|\varepsilon_{U_0^1}(\hat{g}) - \varepsilon_{U_1^1}(\hat{g})\right|\right).
\end{align*}
Based on the upper bound of group error from Theorem~\ref{thm:upper}, and the lower bound of it from Theorem~\ref{thm:lower}, we have
\begin{align*}
    &\left|\varepsilon_{U_0^0}(\hat{g}) - \varepsilon_{U_1^0}(\hat{g})\right| \\
    &\leq \max\Big\{\frac{2}{c-1} \gamma + \frac{2}{c-1}L_{U_1^0}(\hat{g}, g_{tc}) + \frac{2c}{c-1}R_{U_0^0}(\hat{g}),  \\
    &\quad\quad\quad\quad\frac{2}{c-1} \gamma + \frac{2}{c-1}L_{U_0^0}(\hat{g}, g_{tc}) + \frac{2c}{c-1}R_{U_1^0}(\hat{g})\Big\} \tag*{(because $c>3$)} \\
    &= \frac{2}{c-1} \gamma + \frac{2}{c-1}\max\left\{L_{U_1^0}(\hat{g}, g_{tc}) + cR_{U_0^0}(\hat{g}), L_{U_0^0}(\hat{g}, g_{tc}) + cR_{U_1^0}(\hat{g})\right\} \\
    &\leq \frac{2}{c-1}(\gamma + \mu+c\max_{a} \ R_{U_a^0}(\hat{g})). \\
\end{align*}
Therefore, 
\begin{align*}
    \Delta_{odds}(\hat{g}) &\leq  \frac{2}{c-1}\left(\gamma + \mu+\frac{c}{2}\left(\max_{a} \ R_{U_a^0}(\hat{g})+\max_{a} \ R_{U_a^1}(\hat{g})\right)\right) \\
    &\leq  \frac{2}{c-1}(\gamma + \mu+c\max_{a,y} \ R_{U_a^y}(\hat{g})).
\end{align*}
\end{proof}
\textbf{Upper bound of $V_{acc}$.}  
From Theorem~\ref{thm:upper} we know that the group accuracy is upper bounded by $\varepsilon_{U_a^y}(\hat{g}) \leq \frac{2}{c-1} \varepsilon_{U_a^y}(g_{tc}) + \frac{2c}{c-1}R_{U_a^y}(\hat{g})$. 
The variance of group accuracy is defined as 
\begin{align*}
    V_{acc}(\hat{g})&=Var(\{\Pbb(\hat{Y}=y|A=a, Y=y), \forall a,y\})\\ 
    &=Var(\{\varepsilon_{U_a^y}(\hat{g}), \forall a,y\})
\end{align*}
If we assume the same estimation error for all the groups when we use the upper bound to estimate the group accuracy with $\varepsilon_{U_a^y}(\hat{g}) = \frac{2}{c-1} \varepsilon_{U_a^y}(g_{tc}) + \frac{2c}{c-1}R_{U_a^y}(\hat{g})$, then 
\begin{align*}
    V_{acc}(\hat{g})&=Var\left(\left\{\frac{2}{c-1} \varepsilon_{U_a^y}(g_{tc}) + \frac{2c}{c-1}R_{U_a^y}(\hat{g}), \forall a,y\right\}\right)\\ 
\end{align*}
When the teacher classifier has bounded unfairness with $|\varepsilon_{U_a^y}(g_{tc})-\varepsilon_{U_{a'}^{y'}}(g_{tc})|\leq \gamma, \forall a,a',y,y'$, the variance of group accuracy would be mainly affected by the variance of group consistency loss $Var(\{R_{U_a^y}(\hat{g}), \forall a,y\})$. Therefore, this upper bound also suggests us to balance the consistency loss while minimizing it.

\textbf{Multi-sensitive attribute and multi-class cases.}
It is obvious that the Theorem~\ref{th:main} still holds for the binary-sensitive attribute and multi-class case where $\Ycal=\{1,2,.., M\}$. 
For the multi-sensitive attribute case, the key problem is how to define the unfairness. 
If we define the equalized odds in general cases to be the following one, then it is easy to see that Theorem~\ref{th:main} still holds.
\begin{align*}
    \Delta_{odds}(\hat{g})=\frac{1}{|\Ycal|}\sum_{y\in\Ycal}\max_{a,a'\in\Acal}\left|\varepsilon_{U_a^y}(\hat{g}) - \varepsilon_{U_{a'}^y}(\hat{g})\right|
\end{align*}


\section{Details of Experiments} \label{app:exp}
\subsection{Synthetic Dataset} \label{app:3dshapes}

The 3dshapes dataset \footnote{https://github.com/deepmind/3d-shapes} \cite{pmlr-v80-kim18b} contains 480000 RGB images (the size is $64 \times 64 \times 3$) of 3D objects.
Every image is generated by six latent factors (\textit{shape}, \textit{object hue}, \textit{scale}, \textit{orientation}, \textit{floor hue}, \textit{wall hue}) which are annotated along with images. The sample spaces of these factors are shown in Table~\ref{tab:3dshape_factor}.
\begin{table}[h]
    \centering
    \begin{tabular}{l|l}
    \toprule
    Factor & Sample space \\
    \midrule
    shape & 4 values in [0, 1, 2, 3] \\
    object hue & 10 values linearly spaced in [0, 1] \\
    scale & 8 values linearly spaced in [0, 1] \\
    orientation & 15 values linearly spaced in [-30, 30]\\
    floor hue & 10 values linearly spaced in [0, 1] \\
    wall hue & 10 values linearly spaced in [0, 1] \\
\bottomrule
    \end{tabular}
    \caption{Latent factors in 3dshapes dataset.}
    \label{tab:3dshape_factor}
\end{table}

\textbf{How to simulate different types of distribution shift?} 
By varying the marginal distribution of latent factors and then sample images according to the distribution of latent factors, we can simulate different distribution shifts. 
In this paper, we set the image as input $X$, and set class $Y=\textit{shape}$, sensitive attribute $A=\textit{object hue}$, and a nuisance factor that might shift to be $D=\textit{scale}$. 
We consider a binary case, and restrict the \textit{shape} to be in $\{0,1\}$ (i.e. $\{\text{cube}, \text{cylinder}\}$) and \textit{object hue} to be in $\{0,1\}$ (i.e. $\{\text{red}, \text{yellow}\}$). 
In our experiments, we simulate four types of distribution shift. Their specific settings are shown in Table~\ref{tab:3dshape_shifts}. 
We show examples from two domains under different shifts in Figure~\ref{fig:3dshape_shifts}.

\begin{figure*}[t!]

\caption{Randomly sampled examples from two domains under different distribution shifts.}
\label{fig:3dshape_shifts}

\rotatebox[origin=c]{90}{Source\strut}
\vspace{-0.5cm}
    \begin{subfigure}{0.09\textwidth}
        \stackinset{c}{}{t}{-.2in}{\textbf{}}{%
            \includegraphics[width=0.9\textwidth]{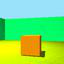}}
    \end{subfigure}%
    \begin{subfigure}{0.09\textwidth}
        \stackinset{c}{}{t}{-.2in}{\textbf{
        }}{%
            \includegraphics[width=0.9\textwidth]{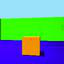}}
    \end{subfigure}%
    \begin{subfigure}{0.09\textwidth}
        \stackinset{c}{}{t}{-.2in}{\textbf{
        }}{%
            \includegraphics[width=0.9\textwidth]{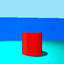}}
        \label{fig:smad7_reproduced_ensemble_ski}
    \end{subfigure}
    \begin{subfigure}{0.09\textwidth}
        \stackinset{c}{}{t}{-.2in}{\textbf{
        }}{%
            \includegraphics[width=0.9\textwidth]{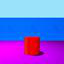}}
        \label{fig:smad7_reproduced_ensemble_ski}
    \end{subfigure}
    \begin{subfigure}{0.09\textwidth}
        \stackinset{c}{}{t}{-.2in}{\textbf{
        }}{%
            \includegraphics[width=0.9\textwidth]{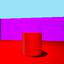}}
        \label{fig:smad7_reproduced_ensemble_ski}
    \end{subfigure}
    \begin{subfigure}{0.09\textwidth}
        \stackinset{c}{}{t}{-.2in}{\textbf{
        }}{%
            \includegraphics[width=0.9\textwidth]{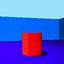}}
        \label{fig:smad7_reproduced_ensemble_ski}
    \end{subfigure}
    \begin{subfigure}{0.09\textwidth}
        \stackinset{c}{}{t}{-.2in}{\textbf{
        }}{%
            \includegraphics[width=0.9\textwidth]{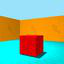}}
        \label{fig:smad7_reproduced_ensemble_ski}
    \end{subfigure}
    \begin{subfigure}{0.09\textwidth}
        \stackinset{c}{}{t}{-.2in}{\textbf{
        }}{%
            \includegraphics[width=0.9\textwidth]{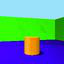}}
        \label{fig:smad7_reproduced_ensemble_ski}
    \end{subfigure}
    \begin{subfigure}{0.09\textwidth}
        \stackinset{c}{}{t}{-.2in}{\textbf{
        }}{%
            \includegraphics[width=0.9\textwidth]{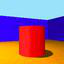}}
        \label{fig:smad7_reproduced_ensemble_ski}
    \end{subfigure}
    \begin{subfigure}{0.09\textwidth}
        \stackinset{c}{}{t}{-.2in}{\textbf{
        }}{%
            \includegraphics[width=0.9\textwidth]{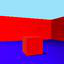}}
    \end{subfigure}
 
\rotatebox[origin=c]{90}{ Target\strut}
    \begin{subfigure}{0.09\textwidth}
        \stackinset{c}{}{t}{-.2in}{\textbf{}}{%
            \includegraphics[width=0.9\textwidth]{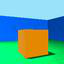}}
        \label{fig:smad7_reproduced_ensemble_Smad7mRNA}
    \end{subfigure}%
    \begin{subfigure}{0.09\textwidth}
        \stackinset{c}{}{t}{-.2in}{\textbf{
        }}{%
            \includegraphics[width=0.9\textwidth]{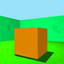}}
        \label{fig:smad7_reproduced_ensemble_Smad7}
    \end{subfigure}%
    \begin{subfigure}{0.09\textwidth}
        \stackinset{c}{}{t}{-.2in}{\textbf{
        }}{%
            \includegraphics[width=0.9\textwidth]{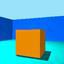}}
        \label{fig:smad7_reproduced_ensemble_ski}
    \end{subfigure}
    \begin{subfigure}{0.09\textwidth}
        \stackinset{c}{}{t}{-.2in}{\textbf{
        }}{%
            \includegraphics[width=0.9\textwidth]{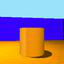}}
        \label{fig:smad7_reproduced_ensemble_ski}
    \end{subfigure}
    \begin{subfigure}{0.09\textwidth}
        \stackinset{c}{}{t}{-.2in}{\textbf{
        }}{%
            \includegraphics[width=0.9\textwidth]{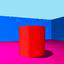}}
        \label{fig:smad7_reproduced_ensemble_ski}
    \end{subfigure}
    \begin{subfigure}{0.09\textwidth}
        \stackinset{c}{}{t}{-.2in}{\textbf{
        }}{%
            \includegraphics[width=0.9\textwidth]{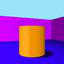}}
        \label{fig:smad7_reproduced_ensemble_ski}
    \end{subfigure}
    \begin{subfigure}{0.09\textwidth}
        \stackinset{c}{}{t}{-.2in}{\textbf{
        }}{%
            \includegraphics[width=0.9\textwidth]{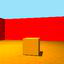}}
        \label{fig:smad7_reproduced_ensemble_ski}
    \end{subfigure}
    \begin{subfigure}{0.09\textwidth}
        \stackinset{c}{}{t}{-.2in}{\textbf{
        }}{%
            \includegraphics[width=0.9\textwidth]{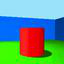}}
        \label{fig:smad7_reproduced_ensemble_ski}
    \end{subfigure}
    \begin{subfigure}{0.09\textwidth}
        \stackinset{c}{}{t}{-.2in}{\textbf{
        }}{%
            \includegraphics[width=0.9\textwidth]{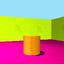}}
        \label{fig:smad7_reproduced_ensemble_ski}
    \end{subfigure}
    \begin{subfigure}{0.09\textwidth}
        \stackinset{c}{}{t}{-.2in}{\textbf{
        }}{%
            \includegraphics[width=0.9\textwidth]{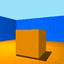}}
    \end{subfigure}
    \subcaption{Sshift 1}
    \vspace{-0.3cm}

\rotatebox[origin=c]{90}{ Source\strut}
\vspace{-0.5cm}
    \begin{subfigure}{0.09\textwidth}
    
            \stackinset{c}{}{t}{-.2in}{\textbf{}}{%
                \includegraphics[width=0.9\textwidth]{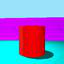}}
        \end{subfigure}%
    \begin{subfigure}{0.09\textwidth}
            \stackinset{c}{}{t}{-.2in}{\textbf{
            }}{%
                \includegraphics[width=0.9\textwidth]{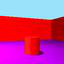}}
            \label{fig:smad7_reproduced_ensemble_Smad7}
        \end{subfigure}%
    \begin{subfigure}{0.09\textwidth}
            \stackinset{c}{}{t}{-.2in}{\textbf{
            }}{%
                \includegraphics[width=0.9\textwidth]{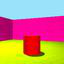}}
            \label{fig:smad7_reproduced_ensemble_ski}
        \end{subfigure}
    \begin{subfigure}{0.09\textwidth}
            \stackinset{c}{}{t}{-.2in}{\textbf{
            }}{%
                \includegraphics[width=0.9\textwidth]{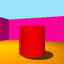}}
            \label{fig:smad7_reproduced_ensemble_ski}
        \end{subfigure}
    \begin{subfigure}{0.09\textwidth}
            \stackinset{c}{}{t}{-.2in}{\textbf{
            }}{%
                \includegraphics[width=0.9\textwidth]{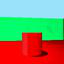}}
            \label{fig:smad7_reproduced_ensemble_ski}
        \end{subfigure}
    \begin{subfigure}{0.09\textwidth}
            \stackinset{c}{}{t}{-.2in}{\textbf{
            }}{%
                \includegraphics[width=0.9\textwidth]{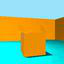}}
            \label{fig:smad7_reproduced_ensemble_ski}
        \end{subfigure}
    \begin{subfigure}{0.09\textwidth}
            \stackinset{c}{}{t}{-.2in}{\textbf{
            }}{%
                \includegraphics[width=0.9\textwidth]{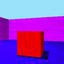}}
            \label{fig:smad7_reproduced_ensemble_ski}
        \end{subfigure}
    \begin{subfigure}{0.09\textwidth}
            \stackinset{c}{}{t}{-.2in}{\textbf{
            }}{%
                \includegraphics[width=0.9\textwidth]{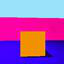}}
            \label{fig:smad7_reproduced_ensemble_ski}
        \end{subfigure}
    \begin{subfigure}{0.09\textwidth}
            \stackinset{c}{}{t}{-.2in}{\textbf{
            }}{%
                \includegraphics[width=0.9\textwidth]{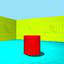}}
            \label{fig:smad7_reproduced_ensemble_ski}
        \end{subfigure}
    \begin{subfigure}{0.09\textwidth}
            \stackinset{c}{}{t}{-.2in}{\textbf{
            }}{%
                \includegraphics[width=0.9\textwidth]{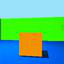}}
            \label{fig:smad7_reproduced_ensemble_ski}
        \end{subfigure}

    \rotatebox[origin=c]{90}{ Target\strut}
     \begin{subfigure}{0.09\textwidth}
            \stackinset{c}{}{t}{-.2in}{\textbf{}}{%
                \includegraphics[width=0.9\textwidth]{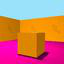}}
            \label{fig:smad7_reproduced_ensemble_Smad7mRNA}
        \end{subfigure}%
     \begin{subfigure}{0.09\textwidth}
            \stackinset{c}{}{t}{-.2in}{\textbf{
            }}{%
                \includegraphics[width=0.9\textwidth]{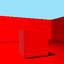}}
            \label{fig:smad7_reproduced_ensemble_Smad7}
        \end{subfigure}%
    \begin{subfigure}{0.09\textwidth}
            \stackinset{c}{}{t}{-.2in}{\textbf{
            }}{%
                \includegraphics[width=0.9\textwidth]{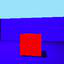}}
            \label{fig:smad7_reproduced_ensemble_ski}
        \end{subfigure}
    \begin{subfigure}{0.09\textwidth}
            \stackinset{c}{}{t}{-.2in}{\textbf{
            }}{%
                \includegraphics[width=0.9\textwidth]{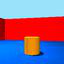}}
            \label{fig:smad7_reproduced_ensemble_ski}
        \end{subfigure}
    \begin{subfigure}{0.09\textwidth}
            \stackinset{c}{}{t}{-.2in}{\textbf{
            }}{%
                \includegraphics[width=0.9\textwidth]{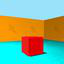}}
            \label{fig:smad7_reproduced_ensemble_ski}
        \end{subfigure}
     \begin{subfigure}{0.09\textwidth}
            \stackinset{c}{}{t}{-.2in}{\textbf{
            }}{%
                \includegraphics[width=0.9\textwidth]{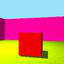}}
            \label{fig:smad7_reproduced_ensemble_ski}
        \end{subfigure}
    \begin{subfigure}{0.09\textwidth}
            \stackinset{c}{}{t}{-.2in}{\textbf{
            }}{%
                \includegraphics[width=0.9\textwidth]{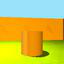}}
            \label{fig:smad7_reproduced_ensemble_ski}
        \end{subfigure}
    \begin{subfigure}{0.09\textwidth}
            \stackinset{c}{}{t}{-.2in}{\textbf{
            }}{%
                \includegraphics[width=0.9\textwidth]{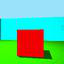}}
        \end{subfigure}
    \begin{subfigure}{0.09\textwidth}
            \stackinset{c}{}{t}{-.2in}{\textbf{
            }}{%
                \includegraphics[width=0.9\textwidth]{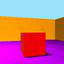}}
            \label{fig:smad7_reproduced_ensemble_ski}
        \end{subfigure}
    \begin{subfigure}{0.09\textwidth}
            \stackinset{c}{}{t}{-.2in}{\textbf{
            }}{%
                \includegraphics[width=0.9\textwidth]{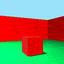}}
        \end{subfigure}
    \subcaption{Sshift 2}
    \vspace{-0.3cm}
        
\rotatebox[origin=c]{90}{Source\strut}
\vspace{-0.5cm}
    \begin{subfigure}{0.09\textwidth}
        \stackinset{c}{}{t}{-.2in}{\textbf{}}{%
            \includegraphics[width=0.9\textwidth]{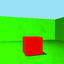}}

    \end{subfigure}%
    \begin{subfigure}{0.09\textwidth}
        \stackinset{c}{}{t}{-.2in}{\textbf{
        }}{%
            \includegraphics[width=0.9\textwidth]{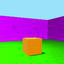}}
        \label{fig:smad7_reproduced_ensemble_Smad7}
    \end{subfigure}%
    \begin{subfigure}{0.09\textwidth}
        \stackinset{c}{}{t}{-.2in}{\textbf{
        }}{%
            \includegraphics[width=0.9\textwidth]{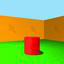}}
        \label{fig:smad7_reproduced_ensemble_ski}
    \end{subfigure}
    \begin{subfigure}{0.09\textwidth}
        \stackinset{c}{}{t}{-.2in}{\textbf{
        }}{%
            \includegraphics[width=0.9\textwidth]{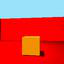}}
    \end{subfigure}
    \begin{subfigure}{0.09\textwidth}
        \stackinset{c}{}{t}{-.2in}{\textbf{
        }}{%
            \includegraphics[width=0.9\textwidth]{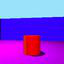}}
    \end{subfigure}
    \begin{subfigure}{0.09\textwidth}
        \stackinset{c}{}{t}{-.2in}{\textbf{
        }}{%
            \includegraphics[width=0.9\textwidth]{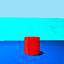}}
        \label{fig:smad7_reproduced_ensemble_ski}
    \end{subfigure}
    \begin{subfigure}{0.09\textwidth}
        \stackinset{c}{}{t}{-.2in}{\textbf{
        }}{%
            \includegraphics[width=0.9\textwidth]{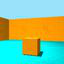}}
    \end{subfigure}
    \begin{subfigure}{0.09\textwidth}
        \stackinset{c}{}{t}{-.2in}{\textbf{
        }}{%
            \includegraphics[width=0.9\textwidth]{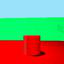}}
        \label{fig:smad7_reproduced_ensemble_ski}
    \end{subfigure}
    \begin{subfigure}{0.09\textwidth}
        \stackinset{c}{}{t}{-.2in}{\textbf{
        }}{%
            \includegraphics[width=0.9\textwidth]{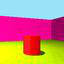}}
        \label{fig:smad7_reproduced_ensemble_ski}
    \end{subfigure}
    \begin{subfigure}{0.09\textwidth}
        \stackinset{c}{}{t}{-.2in}{\textbf{
        }}{%
            \includegraphics[width=0.9\textwidth]{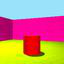}}
        \label{fig:smad7_reproduced_ensemble_ski}
    \end{subfigure}

\rotatebox[origin=c]{90}{ Target\strut}
    \begin{subfigure}{0.09\textwidth}
        \stackinset{c}{}{t}{-.2in}{\textbf{}}{%
            \includegraphics[width=0.9\textwidth]{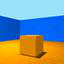}}
        \label{fig:smad7_reproduced_ensemble_Smad7mRNA}
    \end{subfigure}%
    \begin{subfigure}{0.09\textwidth}
        \stackinset{c}{}{t}{-.2in}{\textbf{
        }}{%
            \includegraphics[width=0.9\textwidth]{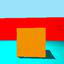}}
        \label{fig:smad7_reproduced_ensemble_Smad7}
    \end{subfigure}%
    \begin{subfigure}{0.09\textwidth}
        \stackinset{c}{}{t}{-.2in}{\textbf{
        }}{%
            \includegraphics[width=0.9\textwidth]{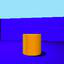}}
        \label{fig:smad7_reproduced_ensemble_ski}
    \end{subfigure}
    \begin{subfigure}{0.09\textwidth}
        \stackinset{c}{}{t}{-.2in}{\textbf{
        }}{%
            \includegraphics[width=0.9\textwidth]{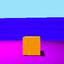}}
        \label{fig:smad7_reproduced_ensemble_ski}
    \end{subfigure}
    \begin{subfigure}{0.09\textwidth}
        \stackinset{c}{}{t}{-.2in}{\textbf{
        }}{%
            \includegraphics[width=0.9\textwidth]{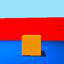}}
        \label{fig:smad7_reproduced_ensemble_ski}
    \end{subfigure}
    \begin{subfigure}{0.09\textwidth}
        \stackinset{c}{}{t}{-.2in}{\textbf{
        }}{%
            \includegraphics[width=0.9\textwidth]{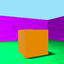}}
        \label{fig:smad7_reproduced_ensemble_ski}
    \end{subfigure}
    \begin{subfigure}{0.09\textwidth}
        \stackinset{c}{}{t}{-.2in}{\textbf{
        }}{%
            \includegraphics[width=0.9\textwidth]{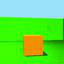}}
        \label{fig:smad7_reproduced_ensemble_ski}
    \end{subfigure}
    \begin{subfigure}{0.09\textwidth}
        \stackinset{c}{}{t}{-.2in}{\textbf{
        }}{%
            \includegraphics[width=0.9\textwidth]{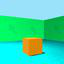}}
        \label{fig:smad7_reproduced_ensemble_ski}
    \end{subfigure}
    \begin{subfigure}{0.09\textwidth}
        \stackinset{c}{}{t}{-.2in}{\textbf{
        }}{%
            \includegraphics[width=0.9\textwidth]{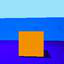}}
        \label{fig:smad7_reproduced_ensemble_ski}
    \end{subfigure}
    \begin{subfigure}{0.09\textwidth}
        \stackinset{c}{}{t}{-.2in}{\textbf{
        }}{%
            \includegraphics[width=0.9\textwidth]{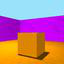}}
        \label{fig:smad7_reproduced_ensemble_ski}
    \end{subfigure}
        \subcaption{Dshift}
         \vspace{-0.3cm}
\rotatebox[origin=c]{90}{ Source\strut}
\vspace{-0.5cm}
    \begin{subfigure}{0.09\textwidth}
        \stackinset{c}{}{t}{-.2in}{\textbf{}}{%
            \includegraphics[width=0.9\textwidth]{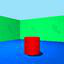}}
        \label{fig:smad7_reproduced_ensemble_Smad7mRNA}
    \end{subfigure}%
    \begin{subfigure}{0.09\textwidth}
        \stackinset{c}{}{t}{-.2in}{\textbf{
        }}{%
            \includegraphics[width=0.9\textwidth]{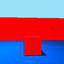}}
        \label{fig:smad7_reproduced_ensemble_Smad7}
    \end{subfigure}%
    \begin{subfigure}{0.09\textwidth}
        \stackinset{c}{}{t}{-.2in}{\textbf{
        }}{%
            \includegraphics[width=0.9\textwidth]{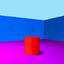}}
        \label{fig:smad7_reproduced_ensemble_ski}
    \end{subfigure}
    \begin{subfigure}{0.09\textwidth}
        \stackinset{c}{}{t}{-.2in}{\textbf{
        }}{%
            \includegraphics[width=0.9\textwidth]{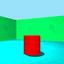}}
        \label{fig:smad7_reproduced_ensemble_ski}
    \end{subfigure}
    \begin{subfigure}{0.09\textwidth}
        \stackinset{c}{}{t}{-.2in}{\textbf{
        }}{%
            \includegraphics[width=0.9\textwidth]{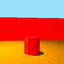}}
        \label{fig:smad7_reproduced_ensemble_ski}
    \end{subfigure}
    \begin{subfigure}{0.09\textwidth}
        \stackinset{c}{}{t}{-.2in}{\textbf{
        }}{%
            \includegraphics[width=0.9\textwidth]{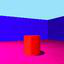}}
        \label{fig:smad7_reproduced_ensemble_ski}
    \end{subfigure}
    \begin{subfigure}{0.09\textwidth}
        \stackinset{c}{}{t}{-.2in}{\textbf{
        }}{%
            \includegraphics[width=0.9\textwidth]{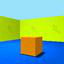}}
        \label{fig:smad7_reproduced_ensemble_ski}
    \end{subfigure}
    \begin{subfigure}{0.09\textwidth}
        \stackinset{c}{}{t}{-.2in}{\textbf{
        }}{%
            \includegraphics[width=0.9\textwidth]{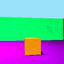}}
        \label{fig:smad7_reproduced_ensemble_ski}
    \end{subfigure}
    \begin{subfigure}{0.09\textwidth}
        \stackinset{c}{}{t}{-.2in}{\textbf{
        }}{%
            \includegraphics[width=0.9\textwidth]{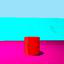}}
        \label{fig:smad7_reproduced_ensemble_ski}
    \end{subfigure}
    \begin{subfigure}{0.09\textwidth}
        \stackinset{c}{}{t}{-.2in}{\textbf{
        }}{%
            \includegraphics[width=0.9\textwidth]{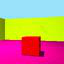}}
        \label{fig:smad7_reproduced_ensemble_ski}
    \end{subfigure}

\rotatebox[origin=c]{90}{ Target\strut}
    \begin{subfigure}{0.09\textwidth}
        \stackinset{c}{}{t}{-.2in}{\textbf{}}{%
            \includegraphics[width=0.9\textwidth]{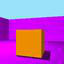}}
        \label{fig:smad7_reproduced_ensemble_Smad7mRNA}
    \end{subfigure}%
    \begin{subfigure}{0.09\textwidth}
        \stackinset{c}{}{t}{-.2in}{\textbf{
        }}{%
            \includegraphics[width=0.9\textwidth]{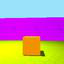}}
        \label{fig:smad7_reproduced_ensemble_Smad7}
    \end{subfigure}%
    \begin{subfigure}{0.09\textwidth}
        \stackinset{c}{}{t}{-.2in}{\textbf{
        }}{%
            \includegraphics[width=0.9\textwidth]{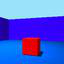}}
        \label{fig:smad7_reproduced_ensemble_ski}
    \end{subfigure}
    \begin{subfigure}{0.09\textwidth}
        \stackinset{c}{}{t}{-.2in}{\textbf{
        }}{%
            \includegraphics[width=0.9\textwidth]{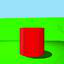}}
        \label{fig:smad7_reproduced_ensemble_ski}
    \end{subfigure}
    \begin{subfigure}{0.09\textwidth}
        \stackinset{c}{}{t}{-.2in}{\textbf{
        }}{%
            \includegraphics[width=0.9\textwidth]{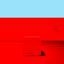}}
        \label{fig:smad7_reproduced_ensemble_ski}
    \end{subfigure}
    \begin{subfigure}{0.09\textwidth}
        \stackinset{c}{}{t}{-.2in}{\textbf{
        }}{%
            \includegraphics[width=0.9\textwidth]{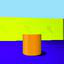}}
        \label{fig:smad7_reproduced_ensemble_ski}
    \end{subfigure}
    \begin{subfigure}{0.09\textwidth}
        \stackinset{c}{}{t}{-.2in}{\textbf{
        }}{%
            \includegraphics[width=0.9\textwidth]{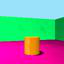}}
        \label{fig:smad7_reproduced_ensemble_ski}
    \end{subfigure}
    \begin{subfigure}{0.09\textwidth}
        \stackinset{c}{}{t}{-.2in}{\textbf{
        }}{%
            \includegraphics[width=0.9\textwidth]{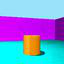}}
    \end{subfigure}
    \begin{subfigure}{0.09\textwidth}
        \stackinset{c}{}{t}{-.2in}{\textbf{
        }}{%
            \includegraphics[width=0.9\textwidth]{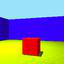}}
\end{subfigure}
    \begin{subfigure}{0.09\textwidth}
        \stackinset{c}{}{t}{-.2in}{\textbf{
        }}{%
            \includegraphics[width=0.9\textwidth]{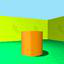}}
    \end{subfigure}
        \subcaption{Hshift}
        \vspace{-0.3cm}

\end{figure*}

\begin{table}[!htbp]
\small
\begin{tabular}{llll}
\toprule
                          & Factor & Source & Target \\
                          \midrule
\multirow{2}{*}{Sshift 1} & $\Pbb(Y,A)$     &  $[0.1,0.4,0.4,0.1]$&  same  \\ 
\cmidrule{2-4} 
                          & $\Pbb(D)$     & $[\frac{4}{16},\frac{4}{16},\frac{3}{16},\frac{1}{16},\frac{1}{16},\frac{1}{16},\frac{1}{16},\frac{1}{16}]$       &   $[\frac{1}{16},\frac{1}{16},\frac{1}{16},\frac{1}{16},\frac{1}{16},\frac{3}{16},\frac{4}{16},\frac{4}{16}]$       \\
                          \midrule
\multirow{2}{*}{Sshift 2} & $\Pbb(Y,A)$    & $[0.1,0.4,0.4,0.1]$       &    $[0.4,0.1,0.1,0.4]$    \\ 
\cmidrule{2-4} 
                          & $\Pbb(D)$      & $[\frac{1}{8},\frac{1}{8},\frac{1}{8},\frac{1}{8},\frac{1}{8},\frac{1}{8},\frac{1}{8},\frac{1}{8}]$       & same      \\
                          \midrule
\multirow{2}{*}{Dshift} & $\Pbb(Y,A)$     &  $[0.1,0.4,0.4,0.1]$&  same  \\ 
\cmidrule{2-4} 
                          & $\Pbb(D)$     & $[\frac{1}{2},\frac{1}{2},0,0,0,0,0,0]$       &   $[\frac{1}{8},\frac{1}{8},\frac{1}{8},\frac{1}{8},\frac{1}{8},\frac{1}{8},\frac{1}{8},\frac{1}{8}]$       \\
                          \midrule
\multirow{2}{*}{Hshift} & $\Pbb(Y,A)$     &  $[0.1,0.4,0.4,0.1]$&  $[0.4,0.1,0.1,0.4]$  \\ 
\cmidrule{2-4} 
                          & $\Pbb(D)$     & $[\frac{1}{2},\frac{1}{2},0,0,0,0,0,0]$       &   $[\frac{1}{8},\frac{1}{8},\frac{1}{8},\frac{1}{8},\frac{1}{8},\frac{1}{8},\frac{1}{8},\frac{1}{8}]$       \\
                          \bottomrule
\caption{Simulate different distribution shifts. $\Pbb(Y,A)$ is represented by the proportions of four groups as $[\Pbb(Y=0, A=0), \Pbb(Y=0, A=1), \Pbb(Y=1, A=0), \Pbb(Y=1, A=1)]$. $\Pbb(D)$ is represented by the proportions of eight possible values of \textit{scale}. Other factors have uniform marginal distributions. Images in two domains are sampled according to the marginal distributions of six latent factors.}
\label{tab:3dshape_shifts}
\end{tabular}
\end{table}

\subsection{Real Datasets}

\textbf{UTKFace}\footnote{https://susanqq.github.io/UTKFace/} \cite{zhifei2017cvpr} is a face dataset with images annotated with age, gender, and race. The data is collected from MORPH, CACD and Web. In our experiments, we use the aligned and cropped face images with ages larger than 10. 
We do gender classification which is a binary classification task, and set sensitive attribute to be the race. 
We consider binary-sensitive attribute case in our experiment by setting race to be white or non-white. 
The statistics of this dataset are shown in Table~\ref{tab:dataset_face}. 
\begin{table}[!htbp]
\small
\begin{tabular}{cccccc}
\toprule
\multicolumn{2}{c}{$(Y, A)$}            & (Male, White) & (Male, Black) & (Female, White) & (Female, Black) \\ 
\midrule
\multirow{2}{*}{UTK (S)}      & train &   3127            &   1508                & 2480                &     1450                \\ \cmidrule{2-6} 
                              & test  &    1377         &   651               &   1027            &      617              \\ 
                              \midrule
\multirow{2}{*}{FairFace (T)} & train &     7796            &   4650                   &      6946            &   5160                      \\ \cmidrule{2-6} 
                              & test  &    984           & 620                  &   839              &     635                \\ 
                             \bottomrule
\caption{Statistics of UTKFace and FairFace datasets.}
\label{tab:dataset_face}
\end{tabular}
\end{table}

\textbf{FairFace}\footnote{https://github.com/joojs/fairface} \cite{karkkainen2019fairface} is another large-scale face dataset with images annotated with age, gender, and race as well. Different from UTKFace, the data in FairFace is collected from Flickr, Twitter, Newspapers, and the Web. We also use images with ages larger than 10 in our experiments. We set the label to be the gender and the sensitive attribute to be the race. 
See statistics in Table~\ref{tab:dataset_face}. 
All face images in UTK and FairFace are resized to $96\times 96 \times 3$ in our experiments.

\textbf{NewAdult}\footnote{https://github.com/zykls/folktables} \cite{ding2021retiring} 
is a suite of datasets derived from US Census surveys. The data spans multiple years and all states of the United States which is a good fit for studying distribution shifts. In our experiments, we use 2018 data that span all states and do income classification with a threshold of 50,000 dollars. 
We set gender to be the sensitive attribute. 
We consider a problem that we train a fair classifier in California (source domain) and deploy it in other states (target domain).
The statistics are shown in Table~\ref{tab:dataset_newadult}. 
The input contains 10 features (see Appendix B.1 in \cite{ding2021retiring}) which are preprocessed to one-hot embeddings in our experiments.
\begin{table}[!htbp]
\small
\begin{tabular}{cccccc}
\toprule
\multicolumn{2}{c}{$(Y, A)$}            & (High, Male) & (High, Female) & (Low, Male) & (Low, Female) \\ 
\midrule
\multirow{2}{*}{CA (S)}      & train &      33258         &       22314            &     39224            &    42169                 \\ \cmidrule{2-6} 
                              & test  &   14839            &    9924           &    15990             &      17947                \\ 
                              \midrule
\multirow{2}{*}{Other states (T)} & train &  232162             &        140876           &      296826           &      351970               \\ \cmidrule{2-6} 
                              & test  &   101934           &      57798             &   127654              &   150544                  \\ 
                             \bottomrule
\caption{Statistics of NewAdult dataset.}
\label{tab:dataset_newadult}
\end{tabular}
\end{table}

\subsection{Experimental Settings}
\subsubsection{Experiments on 3dshapes}

\textbf{Model.} We use a two-layer MLP with 512 hidden units as the encoder and one linear layer as the classifier. The adversaries used in Laftr, CFair, and DANN are also two-layer MLP with 512 hidden units. ReLU is used as the activation function.

\textbf{Transformations.} We use random center cropping and padding as the transformation functions in consistency regularization. Such transformations can perfectly change the \textit{scale} of the objects to propagate labels from the source domain to the target domain.

\textbf{Setup.} We use SGD as the optimizer. We train every model with 200 epochs and select the best model according to the model's performance on the validation set. Base and Laftr only have access to the source data, and the model selection is based on the source validation set. For other methods that can access the unlabeled target data, the model selection is based on the labeled target validation set. 
Since \textit{accuracy} and \textit{fairness} are both important metrics, we select the best model according to the value of accuracy minus unfairness which is $Acc - \Delta_{odds}$. 
The coefficients of the fairness loss and consistency loss are both set to be 1.
We run every method five times and report the mean and the standard deviation.

\subsubsection{Experiments on UTK-FairFace}

\textbf{Model.} 
We use VGG16 and ResNet18 as the model in our experiments. The last linear layer is the classifier, and all the previous layers construct the encoder. When we use VGG16 as the model, we set every adversary used in Laftr, CFair, and DANN to be a two-layer MLP with 1024 hidden neurons. When ResNet18 is the model, the adversary has 512 hidden neurons.

\textbf{Transformations.}
We use RandAugment \cite{cubuk2020randaugment} as the transformation function which contains data augmentations that are the best for the CIFAR-10 dataset. To restrict the transformations to be group-preserving, we exclude augmentations that may change the color (so to change the race). 
The transformation function used in our experiments contains AutoContrast, Brightness, Equalize, Identity, Posterize, Rotate, Sharpness, ShearX, ShearY, TranslateX, and TranslateY. 
In this experiment, we use a weak augmented (with random cropping and flipping) image as the original input $\bx$ and a strong augmented (with RandAugment) image as the transformed input $t(\bx)$.

\textbf{Setup.} We use SGD as the optimizer. We train every model with 200 epochs and use the way introduced in the 3dshapes experiment to select the best model. 
The coefficients of the fairness loss and consistency loss are both set to be 1.
We run every method five times and report the mean and the standard deviation.

\subsubsection{Experiments on NewAdult}

\textbf{Model.} 
We use a 3-layer MLP with hidden sizes of (256, 512, 256) as the encoder and a 2-layer MLP with a hidden size of 128 as the classifier. Every adversary is a two-layer MLP with 128 hidden neurons.

\textbf{Transformations.} 
Studies on data augmentations for tabular data are very limited. 
In this paper, we use random corruptions on the input features as the transformation function. There are ten features in the input, and every time we only corrupt half of them. Additionally, for important factors that are highly correlated with the label, including the OCCP (occupation), COW (class of worker), we do not do any corruption. 
For factor SEX (gender), we do not do any corruption to preserve the group. 
For continuous factors including AGEP (age), SCHL (educational attainment), and WKHP (work hours), we do perturbations within a range. 
For other factors, we do uniformly sampling from their value spaces as corruptions. 
We do such transformations based on our assumption that they do not change the label. 
For example, two individuals that have five years of age gap but have the same other features should have similar income, and two individuals that only differ in the place of birth should have similar income. 
We admit that such transformations may not be the best ones. We need better domain knowledge on income prediction to design more powerful transformations. We leave the improvement of transformations for tabular data to future work.  

\textbf{Setup.} We use SGD as the optimizer.
We train every model with 200 epochs and use the metric introduced in the 3dshapes experiment to select the best model. 
The coefficients of the fairness loss and consistency loss are both set to be 1.
We run every method five times and report the mean and the standard deviation.

\subsection{Baselines}
\textbf{Laftr} is an adversarial learning method for algorithmic fairness. The adversary aims at accurately predicting the sensitive attribute based on the representation, while the encoder aims at making it hard. By adversarial learning, the representation will not contain information on sensitive attributes, so the prediction based on it will be fair. 
We denote the data in each group to be $\Dcal_a^y=\{\bx\in \Dcal | A=a, Y=y\}$. The fairness loss is designed to be
\begin{align*}
    L_{fair}=\sum_{(a,y)\in \{0,1\}^2}\frac{1}{|\Dcal_a^y|}\sum_{\bx \in \Dcal_a^y}|h(f(\bx))-a|.
\end{align*}
where $f$ is the encoder, $h$ is the adversary. \cite{david_learning_fair} prove that this loss is an upper bound of the equalized odds. The adversary minimizes this loss, while the encoder maximizes this loss with a gradient reversal layer.

\textbf{CFair} is similar to Laftr but uses two adversaries $h'$, and $h''$ for two classes with a balanced error rate (BER) defined as follows. We denote the data from one class to be $\Dcal^y=\{\bx\in \Dcal | Y=y\}$.
\begin{align*}
    L_{fair}=\text{BER}_{\Dcal^0}(h'(f(\bx)) \parallel A) +\text{BER}_{\Dcal^1}(h''(f(\bx)) \parallel A) 
\end{align*}

where $\text{BER}_{\Dcal^0}(h'(f(\bx)) \parallel A)=\frac{1}{2}\Pbb_{\Dcal^0 }(h'(f(\bx))\neq A|A=0)+\frac{1}{2}\Pbb_{\Dcal^0 }(h'(f(\bx))\neq A|A=1)$. 
In practice, the balanced error rate is estimated by the following cost-sensitive cross-entropy loss.

\begin{align*}
    \Pbb_{\Dcal^0 }(h'(f(\bx))\neq A|A=0)\leq \frac{\text{CE}_{\Dcal_0^0}(h'(f(\bx))\parallel A)}{\Pbb_{\Dcal^0 }(A=0)}
\end{align*}

\textbf{Laftr+FixMatch} uses the same framework as our method but with a standard consistency regularization that does not care about group performance. The consistency loss is defined as
\begin{align*}
    L_{consis}(g)=\frac{1}{|\Dcal|}\sum_{\bx\in \Dcal} \mathds{1}(\max(g_{tc}(\bx))\geq \tau)H(\argmax(g_{tc}(\bx)), g(t(\bx)))
\end{align*}
where $\Dcal$ denotes the entire dataset.
\subsection{Time and Space Complexity}
Compared with Base, Laftr, and CFair which only uses labeled source data, our method needs more training time and memory since we use unlabeled target data as well. 
Compared with other baselines that also use target data, such as Laftr+DANN, the time complexity of our method is comparable to theirs. Nevertheless, our method needs much fewer parameters than Laftr+DANN since it requires an adversary to do domain classification while we do not need it. 
Our method has the same model parameters as that in Laftr but with an additional consistency loss.

\section{More Experimental Results}\label{app:add-exp}

\subsection{Additional Results on UTKFace-FairFace with a Different Data Setting}
To evaluate our method in extreme circumstances, we conduct the UTKFace-FairFace experiment with less labeled source data and more unlabeled target data (see Table \ref{tab:dataset_face_app}). We also consider the race "white" and "non-white". Are shown in Table~\ref{tab:face-vgg_less}, we get consistent results that our method outperforms all baselines and can effectively transfer accuracy as well as fairness.
\begin{table}[!htbp]
\captionsetup{font=footnotesize}
\small
\centering
\caption{Statistics of UTK and FairFace datasets used in Table \ref{tab:face-vgg_less}.}
\resizebox{0.8\columnwidth}{!}{
\begin{tabular}{cccccc}
\toprule
\multicolumn{2}{c}{$(Y, A)$}            & (Male, White) & (Male, Non-white) & (Female, White) & (Female, Non-white) \\ 
\midrule
\multirow{2}{*}{UTK (S)}      & train &   1373            &   750                & 1650                &     1227                \\ \cmidrule{2-6} 
                              & test  &    565         &   285               &   614            &      370              \\ 
                              \midrule
\multirow{2}{*}{FairFace (T)} & train &     11429            &   16574                   &      8024            &   16838                      \\ \cmidrule{2-6} 
                              & test  &    1712           & 2453                  &   1176              &     2518                \\ 
                             \bottomrule

\label{tab:dataset_face_app}
\end{tabular}
}
\end{table}
\begin{table}[!htbp]
\captionsetup{font=footnotesize}
\small
\centering
\caption{Transfer fairness and accuracy from UTKFace to FairFace with less source data.} 
\resizebox{0.8\columnwidth}{!}{
\begin{tabular}{llllllll}
\toprule
              & \multicolumn{3}{c}{Source}            && \multicolumn{3}{c}{Target}            \\ 
              \cline{2-4} \cline{6-8} 
                              & Acc  & \multicolumn{2}{c}{Unfairness} && Acc & \multicolumn{2}{c}{Unfairness}\\
              \cline{3-4} \cline{7-8}
Method         &    & $V_{acc}$ & $\Delta_{odds}$ &  & & $V_{acc}$ & $\Delta_{odds}$ \\
\midrule
Base           & \facc{89.93}{0.43} &  \facc{2.79}{0.74}  & \facc{4.65}{0.44} && \facc{73.48}{0.56} & \facc{7.49}{3.50} & \facc{6.09}{1.07}   \\
Laftr           & \facc{90.61}{0.33} &  \facc{1.28}{0.43}  & \facc{3.62}{1.17} && \facc{73.29}{0.70} & \facc{5.42}{1.33} & \facc{7.78}{1.77}   \\
CFair          & \facc{90.68}{0.35} &  \facc{1.20}{0.59}  & \facc{3.61}{0.93} && \facc{73.82}{0.81} & \facc{5.71}{1.54} & \facc{7.37}{1.40}   \\
\midrule
Laftr+DANN      & \facc{90.53}{0.98} &  \facc{1.59}{0.97}  & \facc{4.62}{1.24} && \facc{74.44}{1.38} & \facc{6.94}{1.53} & \facc{10.26}{1.85}   \\
CFair+DANN      & \facc{90.23}{0.88} &  \facc{1.82}{0.97}  & \facc{4.96}{1.15} && \facc{74.53}{1.46} & \facc{9.27}{2.16} & \facc{9.96}{1.49}   \\

Laftr+FixMatch  & \facc{95.01}{0.10} &  \facc{1.37}{0.44}  & \facc{4.65}{1.00} && \facc{83.77}{0.45} & \facc{11.58}{1.16} & \facc{6.56}{1.74}   \\
CFair+FixMatch  & \facc{95.37}{0.24} &  \facc{1.13}{0.21}  & \facc{3.58}{0.90} && \facc{83.62}{0.51} & \facc{11.96}{1.05} & \facc{5.29}{1.76}   \\

Ours (w/ Laftr)  & \facc{94.77}{0.33} &  \facc{1.35}{0.70}  & \facc{3.28}{0.79} && \facc{84.65}{1.13} & \facc{2.92}{0.72} & \facc{6.99}{0.41}   \\

Ours (w/ CFair)  & \facc{94.92}{0.43} &  \facc{1.09}{0.30}  & \facc{3.00}{1.09} && \facc{84.71}{1.10} & \facc{3.57}{0.60} & \facc{7.34}{0.91}   \\
\bottomrule
\end{tabular}\label{tab:face-vgg_less}
}
\end{table}

\subsection{Additional Results on UTKFace-FairFace with Different Transformations}

\begin{table}[!htbp]
\captionsetup{font=footnotesize}
\small
\centering
\vspace{-0.5em}
\caption{Results by using different transformations in our method. Average results of three trials.} 
\resizebox{0.6 \columnwidth}{!}{
\begin{tabular}{llllllll}
\toprule
               & \multicolumn{3}{c}{Source}            && \multicolumn{3}{c}{Target}            \\ 
               \cline{2-4} \cline{6-8} 
                              & Acc  & \multicolumn{2}{c}{Unfairness} && Acc & \multicolumn{2}{c}{Unfairness}\\
               \cline{3-4} \cline{7-8}
Transformation         &    & $V_{acc}$ & $\Delta_{odds}$ &  & & $V_{acc}$ & $\Delta_{odds}$ \\
\midrule
None & 93.24 & 1.19 & 2.44 && 74.35 & 6.92 & 9.79 \\
All & 96.08 & 0.96 & 2.59 && 85.52 & 2.82 & 5.70 \\
AutoContrast   & 94.82 & 1.12 & 2.66  &  & 79.69 & 5.55 & 7.48  \\
Brightness     & 95.61  & 0.95 & 1.48  &  & 82.16 & 4.89 & 6.39   \\
Color          & 95.53 & 1.07 & 1.28   &  & 81.32 & 6.66  & 8.22  \\
Contrast       & 94.93 & 1.31  & 2.29  &  & 79.35 & 6.37 & 8.39  \\
Equalize       & 95.15 & 1.47 & 2.33  &  & 79.17 & 5.88 & 6.91  \\
Identity       & 96.21  & 1.03 & 1.31  &  & 81.58 & 3.44 & 7.29  \\
Posterize      & 94.92 & 1.77 & 3.06   &  & 79.63 & 5.26 & 6.01  \\
Rotate         & 96.13 & 0.72 & 1.83  &  & 84.33  & 3.80 & 6.34  \\
Sharpness      & 95.73 & 1.03 & 2.64  &  & 81.26 & 5.33 & 7.09  \\
ShearX         & 95.45 & 1.70 & 0.99  &  & 82.47 & 3.30  & 3.72  \\
ShearY         & 96.25 & 0.54 & 1.75  &  & 84.26 & 3.96 & 6.07  \\
Solarize       & 95.89 & 0.98 & 2.67  &  & 80.38  & 7.37 & 8.79  \\
TranslateX     & 96.11 & 0.89 & 1.79  &  & 83.49 & 2.31 & 6.13  \\
TranslateY     & 95.53 & 0.97 & 2.83  &  & 83.04 & 7.16 & 6.17  \\
\bottomrule
\end{tabular}\label{tab:face-aug}
}
\end{table}

To investigate the effect of different transformations in our method, we evaluate 14 transformations in RandAugment and report the results in Table 9. All the transformations can improve the accuracy in both domains. The effect on fairness varies. We find that \textit{Solarize}, \textit{Color}, and \textit{TranslateX} increase the unfairness in the source domain the most, and \textit{Contrast}, \textit{Color} and \textit{Solarize} have the highest unfairness in the target domain. Note that, it does not mean that these augmentations always lead to unfairness but that they are not suitable for our method. Recall that our theory and algorithm are built upon the intra-group expansion assumption. Transformations like \textit{Contrast}, \textit{Color}, and \textit{Solarize} may change the sensitive attribute "race" and break this assumption. Thus, in our experiments (Table~\ref{tab:face-vgg_rand}) we use all the transformations excluding \textit{Contrast}, \textit{Color}, and \textit{Solarize}.

\subsection{A Byproduct: Alleviate the Disparate Impact of Semi-supervised Learning}
\begin{figure}[b]
    \centering
    \includegraphics[width=0.3\textwidth]{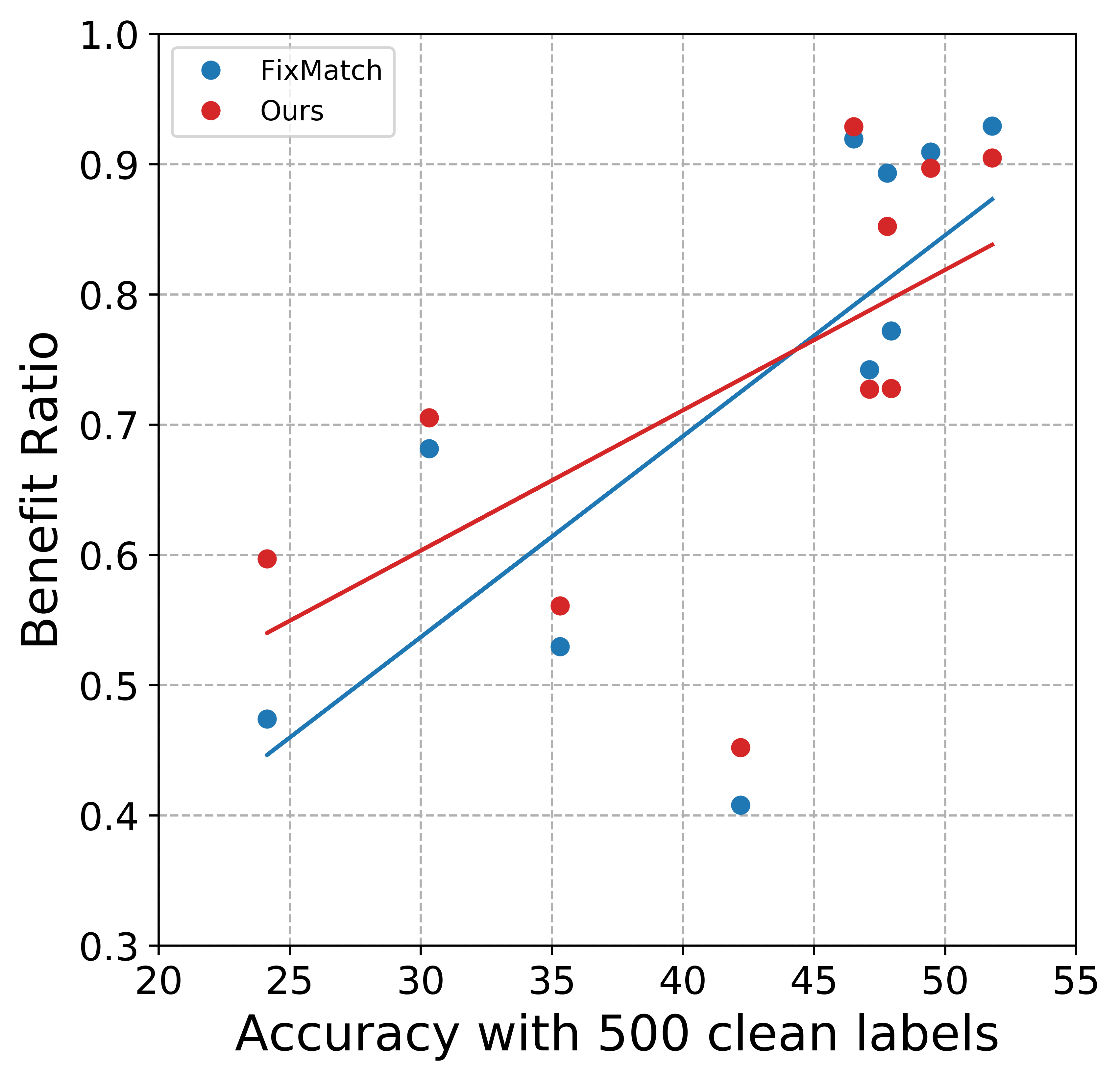}
    \caption{With fair consistency regularization, our method alleviates the disparate impact of FixMatch.}
    \label{fig:cifar}
\end{figure}
\cite{zhu2022the} find that semi-supervised learning methods may have a disparate impact. The classes that have high accuracy on labeled data are likely to benefit more from semi-supervised learning on unlabeled data. 
We test this argument on CIFAR-10 with FixMatch as the semi-supervised learning method. We use ResNet18 as the model. We randomly sample 500 images to be labeled data and treat others as unlabeled data. 
We use the benefit ratio proposed in \cite{zhu2022the} as the metric for the benefit of semi-supervised learning, defined as
\begin{align}
    BR(\Dcal)=\frac{a_{semi}(\Dcal)-a_{baseline}(\Dcal)}{a_{ideal}(\Dcal)-a_{baseline}(\Dcal)}.
\end{align}
where $\Dcal$ denotes the data from one class. $a_{semi}(\Dcal)$ is the model's test accuracy after semi-supervised learning, $a_{baseline}(\Dcal)$ is the test accuracy of the base model that is trained on labeled data, and $a_{ideal}(\Dcal)$ is the test accuracy of the ideal model where all data are labeled. We evaluate the benefit ratio of FixMatch on ten classes. As the blue line in Figure~\ref{fig:cifar} shows, the rich gets richer, and the poor gets poorer after semi-supervised learning. 
Our method (without using Laftr) can directly apply to this task. By using the proposed fair consistency regularization (red line in Figure~\ref{fig:cifar}), we can significantly improve the benefit ratio of the poor classes. Therefore, fair consistency regularization is a byproduct of this paper which is able to alleviate the disparate impact of semi-supervised learning.

\section{Impact and Limitations}\label{app:limit}
The fairness of machine learning is a critical problem in today's real-world applications. 
When distribution shifts happen, the collapse of fair systems will cause unexpected discrimination, resulting in severe negative social impacts. 
The fairness that is robust to distribution shifts is essential but is less explored. 
In this paper, the theoretical analysis of how fairness changes under different distribution shifts sheds light on the deep reasons for the collapse of fairness. 
The theory-guided self-training algorithm proposed in this paper explores a promising way to tackle distribution shifts. 
We hope our work will inspire more algorithms for this important and practical task. 

The major limitation of our method is that it strongly relies on pre-defined transformations as all the other self-training methods. 
The transformations are designed to be group-preserving based on our prior knowledge. 
Our experiments show that self-training with less powerful transformations has limited ability in propagating labels from source to target (i.e. transfer accuracy). 
Valid transformation functions on image data are thoroughly studied in existing work, while transformations on non-image data such as tabular data are much less explored. 
Our method with more powerful transformations on tabular data is expected to have significant improvement. Future work is encouraged to relax this limitation, such as by using a generative model as the transformation function.

\end{document}